\DeclareMathOperator*{\argmax}{arg\,max}
\DeclareMathOperator*{\argmin}{arg\,min}
\DeclareMathOperator*{\supp}{supp}
\theoremstyle{plain}
\newtheorem{theorem}{Theorem}[section]
\newtheorem{lemma}[theorem]{Lemma}
\theoremstyle{definition}
\newtheorem{definition}[theorem]{Definition}
\newtheorem{assumption}[theorem]{Assumption}
\theoremstyle{remark}
\newcommand{\Cov}{\mathrm{Cov}}
\newcommand{\AlgoName}{BOLT}
\title{Time-Varying Bayesian Optimization Without a Metronome}
\author{Anthony Bardou \\
IC\\
EPFL\\
Lausanne, Switzerland \\
\texttt{anthony.bardou@epfl.ch} \\
\And
Patrick Thiran\\
IC\\
EPFL \\
Lausanne, Switzerland \\
\texttt{patrick.thiran@epfl.ch} \\
}
\begin{document}

\maketitle

\begin{abstract}
Time-Varying Bayesian Optimization (TVBO) is the go-to framework for optimizing a time-varying, expensive, noisy black-box function $f$. However, most of the asymptotic guarantees offered by TVBO algorithms rely on the assumption that observations are acquired at a constant frequency. As the GP inference complexity scales with the cube of its dataset size, this assumption is unrealistic in the long run. In this paper, we relax this assumption and derive the first upper regret bound that explicitly accounts for changes in the observations sampling frequency. Based on this analysis, we formulate practical recommendations about dataset sizes and stale data policies of TVBO algorithms. We illustrate how an algorithm (\AlgoName) that follows these recommendations performs better than the state-of-the-art of TVBO through experiments on synthetic and real-world problems.
\end{abstract}

\section{Introduction} \label{sec:intro}

Many real-world problems require the optimization of a noisy, expensive-to-evaluate, black-box objective function $f : \mathcal{S} \subseteq \mathbb{R}^d \to \mathbb{R}$. When queried on an input $\bm x \in \mathcal{S}$, such a function only returns a noisy value $y(\bm x) = f(\bm x) + \epsilon$, where $\epsilon \sim \mathcal{N}(0, \sigma^2_0)$, but does not come with oracles that provide higher-order information such as $\nabla f(\bm x)$ or $\nabla^2 f(\bm x)$. In addition, observing $y(\bm x)$ comes at a cost (time-wise and/or money-wise) that cannot be neglected. Examples of such problems are found in many areas, including robotics~\citep{lizotte07}, computational biology~\citep{gonzalez14}, hyperparameters tuning~\citep{bergstra13} or computer networks~\citep{si2024ns+}.

Bayesian Optimization (BO) is a black-box optimization framework that leverages a surrogate model of the objective function $f$ (usually a Gaussian Process (GP)) to simultaneously discover and optimize $f$. Because it has proven to be a powerful sample-efficient framework for optimizing black-boxes, BO is actually the go-to solution in a broad and diverse range of applications~\citep{marchant2012bayesian, bardou2025assessing, wang2014bayesian}. Its popularity and efficiency initiated a significant research effort dedicated to extend the BO framework to challenging contexts such as multi-objectives~\citep{daulton2022multi} or high-dimensional input spaces~\citep{bardourelaxing}. However, only a handful of papers study Time-Varying Bayesian Optimization~(TVBO), where $f : \mathcal{S} \times \mathcal{T} \to \mathbb{R}$ is also a function of time in the temporal domain $\mathcal{T} \subseteq \mathbb{R}$. This is surprising given the ubiquity of time-varying black-box optimization problems in a variety of domains such as unmanned aerial vehicles~\citep{melo2021dynamic}, online clustering~\citep{aggarwal2004framework} or network management~\citep{kim2019dynamic}.

Because of its temporal dimension, a TVBO task significantly differs from its static counterpart. In particular, recent works have provided evidence that \textit{the response time} $R(n)$, which is a function of the dataset size $n$ and returns the time that separates two consecutive iterations, is a key feature of a TVBO algorithm. As $R(n)$ is typically in $\mathcal{O}(n^3)$, finding a trade-off between a large $n$ and a small $R(n)$ is crucial for practical purposes, because it can yield significant performance gains~\citep{bardou2024too}. However, to the best of our knowledge, there is no analysis that relates $R(n)$ and the performance of TVBO algorithms unless $R(n) \in \Theta(1)$ is assumed. This paper fills this gap by:
\begin{itemize}
    \item deriving the first (to the best of our knowledge) asymptotic regret bound that explicitly relates the performance of TVBO algorithms with their response times $R(n)$ (Section~\ref{sec:main-upper_regret}),
    \item exploiting these results to make recommendations for TVBO algorithms and embedding them into a new algorithm called \AlgoName\ (Sections~\ref{sec:main-upper_regret}, \ref{sec:main-stale} and \ref{sec:algo}),
    \item providing evidence of the superiority of \AlgoName\ over the state-of-the-art of TVBO (Section~\ref{sec:results}).
\end{itemize}

\section{Background} \label{sec:background}

\subsection{Time-Varying Bayesian Optimization}

Like the vast majority of BO algorithms, a TVBO algorithm uses a GP as a surrogate model for the objective function $f$ with observational noise $\sigma^2_0$. A GP with mean $0$ and covariance function $k$ (denoted by $\mathcal{GP}\left(0, k\right)$) is placed on $f$. Conditioned on a dataset of $n$ observations $\mathcal{D} = \left\{\left(\bm x_i, t_i, y_i\right)\right\}_{i \in [n]}$, where $y_i = f(\bm x_i, t_i) + \epsilon, \epsilon \sim \mathcal{N}(0, \sigma^2_0)$ for any $i \in [n]$, the posterior is $\mathcal{GP}\left(\mu_\mathcal{D}, \Cov_{\mathcal{D}}\right)$ where
\begin{align}
    \mu_\mathcal{D}(\bm x, t) &= \bm k((\bm x, t), \mathcal{D}) \bm \Delta^{-1} \bm y, \label{eq:post_mean}\\
    \begin{split} \label{eq:post_cov} 
        \Cov_{\mathcal{D}}\left((\bm x, t), (\bm x', t')\right) &= k((\bm x, t), (\bm x', t')) - \bm k^\top\left((\bm x, t), \mathcal{D}\right) \bm \Delta^{-1} \bm k\left((\bm x', t'), \mathcal{D}\right),   
    \end{split}
\end{align}
where $\bm \Delta = \bm K + \sigma^2_0 \bm I_n$, $\bm K = \bm k(\mathcal{D}, \mathcal{D})$, $\bm k(\mathcal{X}, \mathcal{Y}) = \left(k((\bm x_i, t_i), (\bm x_j, t_j))\right)_{(\bm x_i, t_i) \in \mathcal{X}, (\bm x_j, t_j) \in \mathcal{Y}}$, $\bm y = \left(y_i\right)_{y_i \in \mathcal{D}}$ and where $\bm I_n$ is the $n \times n$ identity matrix.

Equations~\eqref{eq:post_mean} and~\eqref{eq:post_cov} imply that, for any $(\bm x, t) \in \mathcal{S} \times \mathcal{T}$, $f(\bm x, t) | \mathcal{D} \sim \mathcal{N}\left(\mu_\mathcal{D}(\bm x, t), \sigma^2_\mathcal{D}(\bm x, t)\right)$ where
\begin{align}
    \sigma^2_\mathcal{D}(\bm x, t) &= \Cov_{\mathcal{D}}((\bm x, t), (\bm x, t)). \label{eq:post_var}
\end{align}

At the $i$-th iteration occurring at time $t_i$, a TVBO algorithm looks for a query $(\bm x_i, t_i)$ that achieves the best exploration-exploitation trade-off. In the BO framework, this problem is addressed by maximizing an acquisition function $\alpha_\mathcal{D} : \mathcal{S} \times \mathcal{T} \to \mathbb{R}$, so that $\bm x_i = \argmax_{\bm x \in \mathcal{S}} \alpha_\mathcal{D}(\bm x, t_i)$. Many acquisition functions have been proposed, the most popular being GPUCB~\citep{gpucb}, Expected Improvement~\citep{ei} and Probability of Improvement~\citep{pi}.

At the $i$-th iteration, the instantaneous performance of a TVBO algorithm is measured by the instantaneous regret $r_i = f(\bm x^*_{i}, t_i) - f(\bm x_{i}, t_i)$, where $\bm x^*_{i} = \argmax_{\bm x \in \mathcal{S}} f(\bm x, t_i)$. The performance up to an horizon $T$ is usually measured by the cumulative regret $R_T = \sum_{i = 1}^T r_i$.

\subsection{State-of-the-Art of TVBO}

\begin{table*}[t]
\caption{Comparison of TVBO solutions. The solutions are listed in chronological order of publication and are compared on four different criteria: the assumptions they make on the objective function $f$ or on the response time of the algorithm, their handling of stale data and their regret guarantees. When applicable, the best value for each criterion appears \textbf{in bold}. UB stands for Upper Bound and a dagger $\dag$ indicates that the result holds under highly restrictive assumptions.}
\label{tab:sota}
\vskip 0.15in
\begin{center}
\begin{tabular}{ccccc}
\toprule
\makecell{TVBO\\Solution} & \makecell{Assumption on\\Objective $f$} & \makecell{Assumption on\\Response Time} & \makecell{Stale Data\\Policy} & \makecell{Regret\\Guarantees}\\
\midrule
TV-GPUCB & \makecell{Markov Model} & $R(n) \in \Theta(1)$ & None & \textbf{Linear UB}\\
R-GPUCB & \makecell{Markov Model} & $R(n) \in \Theta(1)$ & Periodic Reset & \textbf{Linear UB}\\
ABO & \textbf{No} & \textbf{No} & None & None\\
SW-GPUCB & \makecell{Finite Var. Budget} & $R(n) \in \Theta(1)$ & Sliding Window & Sublinear UB$^\dag$\\
W-GPUCB & Finite Var. Budget & $R(n) \in \Theta(1)$ & None & Sublinear UB$^\dag$\\
ET-GPUCB & \makecell{Markov Model} & $R(n) \in \Theta(1)$ & Event-Based Reset & \textbf{Linear UB}\\
W-DBO & \textbf{No} & \textbf{No} & Relevancy-Based & None\\
\midrule
\AlgoName & \textbf{No} & \textbf{No} & Relevancy-Based & \textbf{Linear UB}\\
\bottomrule
\end{tabular}
\end{center}
\vskip -0.1in
\end{table*}

A synthetic comparison of all TVBO solutions in the literature is provided in Table~\ref{tab:sota}. TVBO has been well studied in two settings: (i)~under the assumption that the objective function $f$ follows a simple Markov model and (ii)~under the assumption that $f$ has a finite variational budget. In setting~(i), it has been shown that any TVBO algorithm incurs a cumulative regret that is at least linear in the number of iterations, and three algorithms with a provable linear asymptotic regret bound have been proposed~\citep{bogunovic2016time, brunzemaevent}. In setting~(ii), sublinear regret bounds can be achieved~\citep{zhou2021no, deng2022weighted}. It should be noted that this setting is unrealistic in general, as it implies that the objective function becomes asymptotically static.

TVBO problems are characterized by three time scales: (i)~\textit{a temporal lengthscale $l_T$} that can be likened to the inverse of the rate of change of $f$ in time, (ii)~\textit{the time horizon $H$} of the optimization task and (iii)~\textit{the response time of the algorithm $R(n)$} that aggregates the time needed for observing $f$ (i.e., $R(0)$) and the time needed for GP inference on a reasonable dataset size $n$. Let us discuss how the state-of-the-art of TVBO performs when these time scales vary.

\paragraph{Near-Constant Response Time.} When $R(\lfloor H / R(0) \rfloor) \sim R(0)$, the cost of GP inference is negligible even when it is conducted on a dataset that contains the maximal number of observations that can be collected within the time horizon $H$ (that is, $\lfloor H / R(0) \rfloor$). This happens when the objective function $f$ is very expensive to evaluate. In this setting, it is reasonable to assume that the response time $R(n)$ is constant and to expect that all algorithms perform well. Note that all TVBO algorithms that provide a regret bound assume $R(n) \in \Theta(1)$ (see Table~\ref{tab:sota}).

\paragraph{Quasi-Static Problems.} When $l_T \gg R(n)$ for a reasonable dataset size $n \ll H$, observations can be acquired rapidly with respect to the rate of change of the objective function. Although the time taken by GP inference may not be negligible (i.e., $R(n) \in \Theta(1)$ may no longer be assumed), TVBO algorithms that address the time-varying problem as a series of static optimization problems coupled with periodic resets of their datasets (e.g., R-GPUCB, ET-GPUCB) may still perform well, even though they remove relevant observations at each reset. However, TVBO algorithms that never remove observations from their datasets (e.g., TV-GPUCB, ABO) may eventually become prohibitive to use.

\paragraph{All Other Scenarios.} In all other scenarios (e.g., large time horizon $H$, $l_T \sim R(n)$ for a reasonable dataset size $n$), most TVBO algorithms in the state-of-the-art are expected to be suboptimal since the response time cannot safely be assumed constant (TV-GPUCB and ABO may eventually become prohibitive) and $f$ cannot be reasonably seen as quasi-static (R-GPUCB and ET-GPUCB may not be able to learn anything meaningful before triggering another reset of their datasets). W-DBO may still perform well in these settings, but it does not offer any regret guarantee.

In this paper, we conduct the first regret analysis that is explicitly built on the response time of TVBO algorithms, without assuming a finite variational budget for $f$ or a constant response time $R(n)$. This analysis allows us to make recommendations for TVBO solutions regarding their maximal dataset sizes~(Section~\ref{sec:main-upper_regret}) and their policies to deal with stale observations (Section~\ref{sec:main-stale}). Finally, we design \AlgoName, a TVBO algorithm that follows our recommendations (Section~\ref{sec:algo}) and demonstrate its competitiveness against the state-of-the-art of TVBO on quasi-static, near-constant response time and other types of synthetic and real-world problems (Section~\ref{sec:results}).

\section{Main Results} \label{sec:main}

\subsection{Core Assumptions} \label{sec:main-assumptions}

In this section, we introduce the core assumptions underpinning this work. Assumption~\ref{ass:gp} is made in all GP-based BO papers as it ensures the existence of the surrogate model. Assumption~\ref{ass:separability} is widely used in the TVBO literature~\citep{bogunovic2016time, nyikosa2018bayesian, bardou2024too}. It captures spatio-temporal dynamics with two correlation functions $k_S$ and $k_T$. Assumption~\ref{ass:kernel_props} requires that $k_S$ and $k_T$ verify some properties. It holds for most kernels used in practice (e.g., Matérn, RBF, Rational Quadratic). Assumption~\ref{ass:lipschitz} introduces some regularity about the objective function in the spatial domain $\mathcal{S}$. It is used for most regret proofs in the literature~\citep{gpucb, bogunovic2016time} and is satisfied by sufficiently smooth GPs (e.g., built with Matérn with $\nu > 2$, RBF) but it can fail with rougher GPs (e.g., Ornstein-Uhlenbeck processes built with Matérn kernels where $\nu = 1/2$).

\begin{assumption}[Surrogate Model] \label{ass:gp}
    The objective function $f : \mathcal{S} \times \mathcal{T} \to \mathbb{R}$ is a $\mathcal{GP}\left(\mu_0, k\right)$, where $\mathcal{S} = [0, 1]^d$, $\mu_0 = 0$ without loss of generality and $k : \left(\mathcal{S} \times \mathcal{T}\right)^2 \to \mathbb{R}$ is a covariance function.
\end{assumption}

\begin{assumption}[Covariance Separability] \label{ass:separability}
    The covariance function $k$ has the following form:
    \begin{equation*} \label{eq:tv_cov}
        k((\bm x, t), (\bm x', t')) = \lambda k_S(\bm x, \bm x') k_T(t, t')
    \end{equation*}
    where, without loss of generality, $\lambda = 1$ is the signal variance while $k_S : \mathcal{S} \times \mathcal{S} \to [-1, 1]$ and $k_T : \mathcal{T} \times \mathcal{T} \to [-1, 1]$ are correlation functions in the spatial and temporal domain, respectively.
\end{assumption}

\begin{assumption}[Covariance Properties] \label{ass:kernel_props}
    The correlation function $k_S$ (resp., $k_T$) is real, even, stationary and can therefore be rewritten as $k_S(\bm x, \bm x') = k_S(\bm x - \bm x')$ for all $\bm x, \bm x' \in \mathcal{S}$ (resp., $k_T(t, t') = k_T(|t - t'|)$ for all $t, t' \in \mathcal{T}$). Furthermore, $\min_{\bm x, \bm x' \in \mathcal{S}} k_S(\bm x, \bm x') > 0$, $k_T$ is isotropic and $\supp(S_T) \subseteq \mathbb{R}$ is a (potentially unbounded) interval, where $S_T$ is the Fourier transform of $k_T$. 
\end{assumption}

\begin{assumption}[Lipschitz Continuity in $\mathcal{S}$] \label{ass:lipschitz}
    Let $g \sim \mathcal{GP}(0, k_S)$. Then, for any $\bm x \in \mathcal{S}$, any $L > 0$ and any $i \in [d]$,
    \begin{equation*}
        \mathbb{P}\left[\left|\frac{\partial g(\bm x)}{\partial x_i}\right| > L\right] \leq a e^{-(L/b)^2}.
    \end{equation*}
\end{assumption}

Finally, let us properly define the response time $R$ of a TVBO algorithm, which satisfies three natural properties: (i)~the objective function cannot be observed at an arbitrarily high frequency because querying it takes a nonzero amount of time, (ii)~the response time is an increasing function of the dataset size $n$ and (iii)~when the dataset size $n$ diverges, the response time diverges as well.

\begin{definition}[Response Time] \label{def:response_time}
    The response time $R : \mathbb{N} \to \mathbb{R}^+$ is a function that returns the time separating two consecutive iterations of a TVBO algorithm with a dataset of size $n \in \mathbb{N}$ such that (i)~$R(0) > 0$, (ii)~$\forall n \in \mathbb{N}, R(n+1) \geq R(n)$ and (iii)~$\lim_{n \to \infty} R(n) = +\infty$.
\end{definition}

\subsection{Regret Analysis Without a Metronome} \label{sec:main-upper_regret}

We now analyze the regret of a TVBO algorithm under the assumptions and definitions introduced in Section~\ref{sec:main-assumptions}. As the cumulative regret of a TVBO algorithm is linear in the worst case when $R(n) \in \Theta(1)$~\cite{bogunovic2016time}, one can expect a similar result when $R(n) \in \omega(1)$.

To be able to track the maximal argument of $f$ in the long run, a TVBO algorithm must prevent its dataset size from diverging. As reported by Table~\ref{tab:sota}, the TVBO literature proposes two policies to achieve that:~(i)~reset the dataset periodically~\citep{bogunovic2016time} or after an event is triggered~\citep{brunzemaevent} and (ii)~delete observations on the fly based on a removal budget~\citep{bardou2024too}. Under the assumption that $k_T$ is an exponential kernel and $R(n) \in \Theta(1)$, \citet{bogunovic2016time} and~\citet{brunzemaevent} showed that the policy~(i) incurs a linear cumulative regret. As the latest empirical evidence suggests that the best policy is to remove observations on the fly~\citep{bardou2024too}, we derive an asymptotic regret bound for any TVBO algorithm that adopts policy~(ii) to make sure that its dataset size does not exceed a maximal size.

\begin{theorem} \label{thm:upper_regret_bound}
    Let $\mathcal{A}$ be a TVBO algorithm that uses the GPUCB acquisition function. Let $R(s)$ be the response time of $\mathcal{A}$ for a dataset size $s \in \mathbb{N}$. Let $n$ be the maximal dataset size of $\mathcal{A}$ and let $||\bm u_n||_2^2 = \sum_{i = 1}^n k_T^2(iR(n))$. Pick $\delta \in (0, 1)$ and let $R_T$ be the cumulative regret of $\mathcal{A}$ after $T$ iterations. Then, with probability $1 - \delta$,
    \begin{equation} \label{eq:regret_bound}
        R_T \leq \sqrt{T C_1 \beta_T\left(\gamma_n + \frac{\sigma^{-2}_0}{2} (T-n)(1 - C_2||\bm u_n||_2^2)\right)} + \frac{\pi^2}{6}
    \end{equation}
    where $C_1$, $C_2 \in \mathcal{O}(1)$, $\gamma_n \in \mathcal{O}(n)$ and $\beta_T \in \mathcal{O}(\log T)$ are defined in Appendix~\ref{app:dataset_size}.
\end{theorem}

The proof of Theorem~\ref{thm:upper_regret_bound} is provided in Appendix~\ref{app:dataset_size}. In essence, we start by bounding the cumulative regret of the TVBO algorithm from above with an expression that involves the mutual information $\gamma_n = I(\bm f_n, \bm y_n)$, where $\bm f_n = \left(f(\bm x_1, t_1), \cdots, f(\bm x_n, t_n)\right)$ and $\bm y_n = \left(y_1, \cdots, y_n\right)$, and the posterior variance of the surrogate model with maximal dataset size $n$. This is achieved with proof techniques introduced by~\citet{gpucb, bogunovic2016time}. Then, we derive an upper bound for the posterior variance that involves $||\bm u_n||^2_2$. Combining these two upper bounds yields~\eqref{eq:regret_bound}.

Note that Theorem~\ref{thm:upper_regret_bound} recovers the well-known sublinear growth for the cumulative regret of GPUCB in the static case. In fact, when $k_T(t, t') = 1$ for any $t, t' \in \mathcal{T}$, $\gamma_n$ is the mutual information associated with $k_S$ and setting $n = T$ (removing the constraint of a maximal dataset size) yields $R_T \leq \sqrt{T C_1 \beta_T \gamma_T} + \pi^2/6$, which is exactly the upper bound of $R_T$ obtained in Theorem~2 of~\citet{gpucb}.

For a fixed dataset size $n$, Theorem~\ref{thm:upper_regret_bound} immediately yields $R_T \in \mathcal{O}\left(T\left(1-C_2||\bm u_n||^2_2\right)\right)$ (we neglect constant and polylogarithmic terms for the sake of simplicity). This linear upper regret bound is in line with the other regret bounds derived in the TVBO literature when the response time $R(n)$ is constant with respect to $n$~\citep{bogunovic2016time, brunzemaevent}. As a byproduct of Theorem~\ref{thm:upper_regret_bound}, we also deduce a recommendation for the maximal dataset size of a TVBO algorithm with temporal kernel $k_T$ and response time $R(n)$. In fact, there is a value of $n$ that minimizes \eqref{eq:regret_bound}, which leads to the following recommendation.

\paragraph{Recommendation 1.} A TVBO algorithm should maintain a dataset of size $n^*$, where
\begin{align}
    n^* &= \argmax_{n \in \mathbb{N}} ||\bm u_n||_2^2 = \argmax_{n \in \mathbb{N}} \sum_{i = 1}^n k_T^2(iR(n)). \label{eq:max_dataset_size}
\end{align}

Finding a closed form for $n^*$ is difficult. However, it can be easily shown that the relaxed form of the function to maximize in~\eqref{eq:max_dataset_size}, that is, $g(x) = \int_1^x k_T^2(sR(x))ds$, has a unique maximum $x^*$ under Assumption~\ref{ass:kernel_props} and Definition~\ref{def:response_time}. Consequently, \eqref{eq:max_dataset_size} can be found using the finite difference $\Delta u(n) = ||\bm u_{n+1}||_2^2 - ||\bm u_{n}||_2^2$. Starting from an initial guess $n_0 \in \mathbb{N}$, the sequence formed by iteratively applying $n_{i+1} = n_i + \text{sign}\left(\Delta u(n)\right)$ will converge to $n^*$.

\begin{figure}
    \centering
    \includegraphics[height=4.6cm]{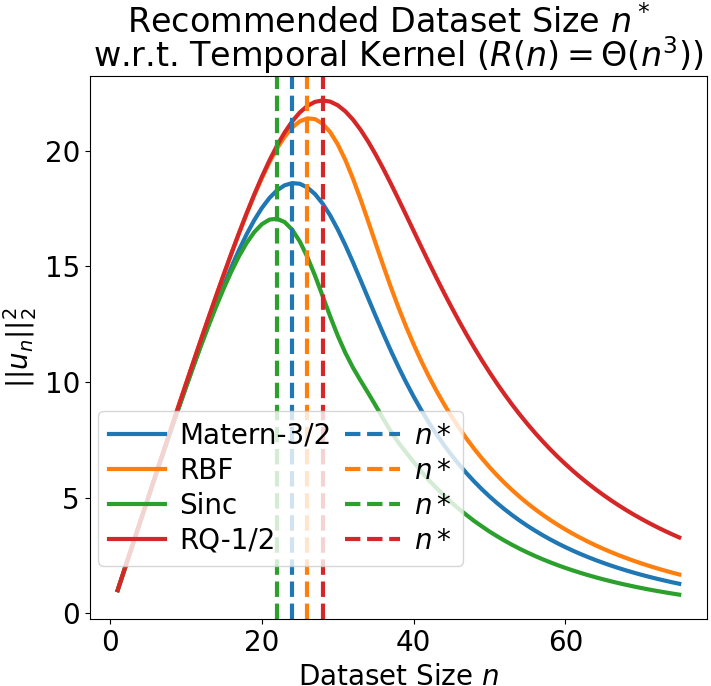} \quad
    \includegraphics[height=4.6cm]{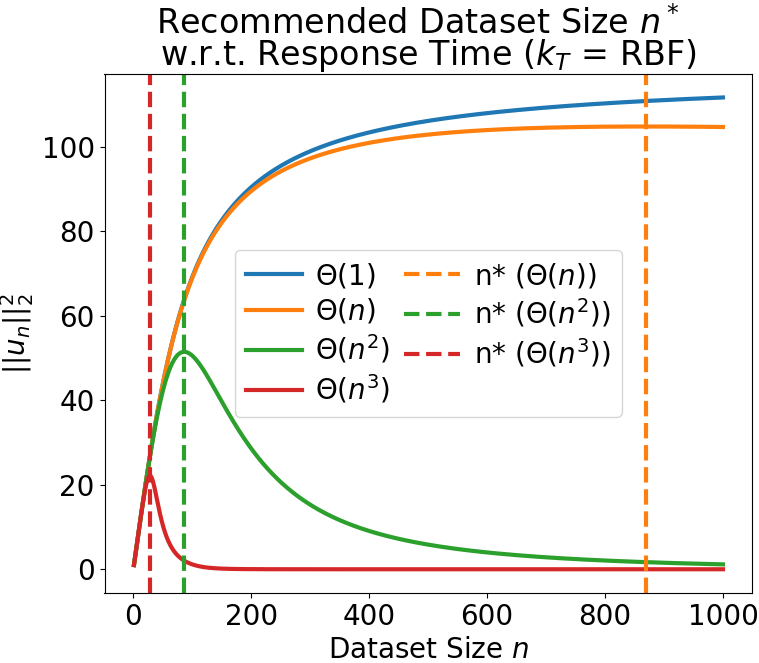}
    \caption{Recommended dataset size $n^* = \argmax_{n \in \mathbb{N}}||\bm u_n||^2_2$. (Left)~Recommended dataset size for several common temporal covariance functions $k_T$, under the assumption that the response time is $R(n) \in \Theta(n^3)$. (Right)~The recommended dataset size for several response times, under the assumption that $k_T$ is an RBF covariance function.}
    \label{fig:max_dataset_size}
\end{figure}

Figure~\ref{fig:max_dataset_size} illustrates how $||\bm u_n||_2^2$ and $n^*$ behave under different covariance functions and response times. The left panel shows that the smoothest temporal covariance functions (e.g., RBF or Rational Quadratic) also yield the largest $n^*$. This is intuitive because a smoother GP surrogate can extract useful information from observations collected further in the past. The right panel shows that the response time of the TVBO algorithm has a significant impact on $||\bm u_n||_2^2$ and $n^*$: response times that scale slowly with the dataset size $n$ lead to larger $n^*$. In the case where $R(n)$ does not scale with the dataset size $n$ (i.e., a constant response time $R(n) \in \Theta(1)$), $||\bm u_n||_2^2$ is always increasing; therefore, an infinite dataset size is recommended.

\subsection{Removal of Irrelevant Observations} \label{sec:main-stale}

In the previous section, we have made a recommendation about the dataset size of a TVBO algorithm, but we have not provided a policy to select which observations to remove from the dataset $\mathcal{D}$. The intuitive policy would be to remove the oldest observations in the dataset, but recent works provide empirical evidence that this is suboptimal.

Instead, \citet{bardou2024too}~propose to remove the observation $\bm o_i \in \mathcal{D}$ that minimizes the integrated 2-Wasserstein distance~\citep{kantorovich1960mathematical} between two GP surrogates: $\mathcal{GP}_{\mathcal{D}}\left(\mu_{\mathcal{D}}, \sigma^2_{\mathcal{D}}\right)$ conditioned on the dataset $\mathcal{D}$ and $\mathcal{GP}_{\Tilde{\mathcal{D}}}\left(\mu_{\Tilde{\mathcal{D}}}, \sigma^2_{\Tilde{\mathcal{D}}}\right)$ conditioned on $\Tilde{\mathcal{D}} = \mathcal{D} \setminus \left\{\bm o_i\right\}$. However, they do not justify why removing these observations is effective. In this section, we provide a rigorous justification for this policy, under the following assumption.

\begin{assumption}[Lipschitz Acquisition] \label{ass:acquisition_lipschitz}
    At iteration $T \in \mathbb{N}$, the acquisition function $\alpha : \mathcal{S} \times \mathcal{T} \to \mathbb{R}$ is built from the posterior mean $\mu(\bm x, t)$ and the posterior standard deviation $\sigma\left(\bm x, t\right)$ of the GP surrogate, that is, $\alpha(\bm x, t) = g_T(\mu(\bm x, t), \sigma\left(\bm x, t\right))$ for some function $g_T : \mathbb{R}^2 \to \mathbb{R}$. Furthermore, $g_T$ is a Lipschitz continuous function, that is, 
    \begin{equation} \label{eq:acquisition_lipschitz}
        |g_T(\bm u) - g_T(\bm v)| \leq L_T ||\bm u - \bm v||_2, \text{ }\forall \bm u, \bm v \in \mathbb{R}^2.
    \end{equation}
\end{assumption}

Assumption~\ref{ass:acquisition_lipschitz} holds for many acquisition functions. For example, if $\alpha$ is GPUCB~\citep{gpucb}, that is, $\alpha(\bm x, t) = \mu(\bm x, t) + \beta_T^{1/2} \sigma(\bm x, t)$, then Assumption~\ref{ass:acquisition_lipschitz} holds with $L_T = \sqrt{1 + \beta_T}$.

\begin{theorem} \label{thm:acquisition_wasserstein}
    Let $\alpha$ be an acquisition function satisfying Assumption~\ref{ass:acquisition_lipschitz}. Let us denote by $\alpha_\mathcal{D}$ (resp., $\alpha_{\Tilde{\mathcal{D}}}$) the acquisition function $\alpha$ exploiting the surrogates $\mathcal{GP}_\mathcal{D}$ (resp., $\mathcal{GP}_{\Tilde{\mathcal{D}}}$). Then, on any subset $\mathcal{S}' \times \mathcal{T}'$ of the spatio-temporal domain $\mathcal{S} \times \mathcal{T}$:
    \begin{equation}
        ||\alpha_\mathcal{D} - \alpha_{\Tilde{\mathcal{D}}}||_2 \leq L_T W_2\left(\mathcal{GP}_\mathcal{D}, \mathcal{GP}_{\Tilde{\mathcal{D}}}\right),
    \end{equation}
    where $||\alpha_\mathcal{D} - \alpha_{\Tilde{\mathcal{D}}}||_2$ is the $L^2$ distance between $\alpha_\mathcal{D}$ and $\alpha_{\Tilde{\mathcal{D}}}$ on $\mathcal{S}' \times \mathcal{T}'$ and $W_2\left(\mathcal{GP}_\mathcal{D}, \mathcal{GP}_{\Tilde{\mathcal{D}}}\right)$ is the integrated 2-Wasserstein distance between $\mathcal{GP}_\mathcal{D}$ and $\mathcal{GP}_{\Tilde{\mathcal{D}}}$ on the same domain $\mathcal{S}' \times \mathcal{T}'$.
\end{theorem}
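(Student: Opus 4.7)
The plan is to reduce the bound to a pointwise inequality, using the Lipschitz property of $g_T$ (Assumption~\ref{ass:acquisition_lipschitz}) and the closed-form expression of the 2-Wasserstein distance between two Gaussians, and then integrate.

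First, I would fix an arbitrary point $(\bm x, t) \in \mathcal{S}' \times \mathcal{T}'$. By Assumption~\ref{ass:acquisition_lipschitz},
\begin{equation*}
    |\alpha_\mathcal{D}(\bm x,t) - \alpha_{\Tilde{\mathcal{D}}}(\bm x,t)| \leq L_T \sqrt{(\mu_\mathcal{D}(\bm x,t) - \mu_{\Tilde{\mathcal{D}}}(\bm x,t))^2 + (\sigma_\mathcal{D}(\bm x,t) - \sigma_{\Tilde{\mathcal{D}}}(\bm x,t))^2}.
\end{equation*}
The next step is to recognize the right-hand side as (up to the factor $L_T$) the 2-Wasserstein distance between the two marginal Gaussian distributions $\mathcal{N}(\mu_\mathcal{D}(\bm x,t), \sigma_\mathcal{D}^2(\bm x,t))$ and $\mathcal{N}(\mu_{\Tilde{\mathcal{D}}}(\bm x,t), \sigma_{\Tilde{\mathcal{D}}}^2(\bm x,t))$, whose closed form between two univariate Gaussians $\mathcal{N}(m_1,s_1^2)$ and $\mathcal{N}(m_2,s_2^2)$ is $\sqrt{(m_1-m_2)^2 + (s_1-s_2)^2}$. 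This directly identifies the pointwise bound with $L_T \, W_2\!\left(\mathcal{GP}_\mathcal{D}(\bm x,t), \mathcal{GP}_{\Tilde{\mathcal{D}}}(\bm x,t)\right)$.

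Then I would square both sides and integrate over $\mathcal{S}' \times \mathcal{T}'$, yielding
\begin{equation*}
    \|\alpha_\mathcal{D} - \alpha_{\Tilde{\mathcal{D}}}\|_2^2 \leq L_T^2 \int_{\mathcal{S}' \times \mathcal{T}'} W_2^2\!\left(\mathcal{GP}_\mathcal{D}(\bm x,t), \mathcal{GP}_{\Tilde{\mathcal{D}}}(\bm x,t)\right) d\bm x\, dt,
\end{equation*}
and invoke the definition of the integrated 2-Wasserstein distance between two GPs used by Bardou et al., namely that $W_2(\mathcal{GP}_\mathcal{D}, \mathcal{GP}_{\Tilde{\mathcal{D}}})$ on $\mathcal{S}' \times \mathcal{T}'$ is precisely the square root of this integral of pointwise squared Wasserstein distances between marginals. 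Taking square roots finishes the proof.

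I do not foresee a substantial obstacle: the argument is essentially a one-line Lipschitz bound followed by an integration, and the key non-trivial ingredient (the closed form for $W_2$ between two univariate Gaussians) is classical. The only care needed is to make the definition of the integrated 2-Wasserstein distance explicit (pointing to Appendix~\ref{app:covariance} if needed) so that the final integration step matches the definition used in the statement, and to verify that $g_T$ being Lipschitz with respect to the Euclidean norm on $(\mu, \sigma)$ — rather than on $(\mu, \sigma^2)$ — is exactly what the Gaussian $W_2$ formula requires.
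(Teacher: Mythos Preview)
Your proposal is correct and follows essentially the same approach as the paper: apply the Lipschitz bound from Assumption~\ref{ass:acquisition_lipschitz} pointwise, square and integrate over $\mathcal{S}'\times\mathcal{T}'$, and identify the resulting integral as $L_T^2 W_2^2(\mathcal{GP}_\mathcal{D},\mathcal{GP}_{\Tilde{\mathcal{D}}})$ before taking square roots. The only addition in your write-up is the explicit recognition of the pointwise right-hand side as the closed-form $W_2$ between univariate Gaussians, which the paper leaves implicit in its definition of the integrated 2-Wasserstein distance.
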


Theorem~\ref{thm:acquisition_wasserstein} is proven in Appendix~\ref{app:recommended_stale}. Assumption~\ref{ass:acquisition_lipschitz} allows to upper bound the $L^2$ distance between $\alpha_{\mathcal{D}}$ and $\alpha_{\Tilde{\mathcal{D}}}$ on any subset of $\mathcal{S} \times \mathcal{T}$ in terms of the posterior means and variances of $\mathcal{GP}_\mathcal{D}$ and $\mathcal{GP}_{\Tilde{\mathcal{D}}}$, which in turn naturally leads to the integrated Wasserstein distance between the GP surrogates.

\paragraph{Recommendation 2.} Let $\Tilde{\mathcal{D}}_i$ be the dataset where one observation $\bm o_i$ has been removed from the dataset $\mathcal{D} = \left\{\bm o_1, \cdots, \bm o_n\right\}$. Then, choosing to remove $\bm o_{i^*}$, where $i^* = \argmin_{i \in [n]} W_2\left(\mathcal{GP}_\mathcal{D}, \mathcal{GP}_{\Tilde{\mathcal{D}}_i}\right)$, minimizes the effect of the removal on the acquisition function $\alpha$. Note that $i^*$ can be found by an online algorithm, as in~\citet{bardou2024too}.

\section{\AlgoName: Bayesian Optimization in the Long Term} \label{sec:algo}

In the previous section, we have made two recommendations about the dataset size and the observation removal policy for TVBO algorithms, which are only useful when the optimization task is carried out in the long term, that is, over a sufficiently long period of time so that removing observations from $\mathcal{D}$ produces a noticeable effect on the response time $R(n)$ and, therefore, on performance.

In this section, we propose \AlgoName, a new TVBO algorithm that follows these two recommendations. The pseudocode is provided in Algorithm~\ref{alg:dbo}. \AlgoName\ follows three simple steps: (i)~at time $t$, find a promising query $(\bm x_t, t)$, (ii)~observe the response $f(\bm x_t, t)$ and augment $\mathcal{D}$ with the collected observation and (iii)~remove stale observations if necessary. The algorithm is based on W-DBO~\citep{bardou2024too}, but replaces the arbitrary budget used by W-DBO to remove stale observations by the principled maximal dataset size deduced from Theorem~\ref{thm:upper_regret_bound}. This has important implications in practice. For example, unlike W-DBO, \AlgoName\ can be used on widely different devices as the maximal dataset size~\eqref{eq:max_dataset_size} naturally adapts to different response times (which may be affected by the computing powers of the devices).

\begin{algorithm}[tb]
   \caption{\AlgoName}
   \label{alg:dbo}
\begin{algorithmic}
   \STATE {\bfseries Input:} objective $f : \mathcal{S} \times \mathcal{T} \to \mathbb{R}$, acquisition function $\alpha$, clock $\mathcal{C}$
   \STATE Init dataset $\mathcal{D} = \emptyset$
   \WHILE{\textbf{true}}
       \STATE Get current time $t$ from $\mathcal{C}$
       \STATE Find $x_t = \argmax_{\bm x \in \mathcal{S}} \alpha_\mathcal{D}(\bm x, t)$
       \STATE Observe $y = f(\bm x_t, t) + \epsilon$
       \STATE $\mathcal{D} = \mathcal{D} \cup \left\{((\bm x_t, t), y)\right\}$
       \STATE Infer GP hyperparameters and response time $R(n)$ from data
       \STATE Find the recommended maximum dataset size $n^*$ with~\eqref{eq:max_dataset_size}
       \IF{$|\mathcal{D}| > n^*$}
          \STATE Find $\bm o^* = \argmin_{\bm o \in \mathcal{D}} W_2\left(\mathcal{GP}_{\mathcal{D}}, \mathcal{GP}_{\mathcal{D} \setminus \left\{\bm o\right\}}\right)$
          \STATE $\mathcal{D} = \mathcal{D} \setminus \left\{\bm o^*\right\}$
       \ENDIF
   \ENDWHILE
\end{algorithmic}
\end{algorithm}

\paragraph{Estimating the Reponse Time.} In Algorithm~\ref{alg:dbo}, $R(n)$ is inferred in real-time, similarly to the GP surrogate hyperparameters (e.g., the variance $\lambda$, the spatial and temporal lengthscales for $k_S$ and $k_T$, the noise level $\sigma^2_0$). At the beginning of any iteration on dataset $\mathcal{D}$ (and dataset size $|\mathcal{D}|$), the corresponding response time $R(|\mathcal{D}|)$ is measured as the duration between two consecutive calls to the clock $\mathcal{C}$ used in Algorithm~\ref{alg:dbo} to obtain the current time. Next, the observation $(|\mathcal{D}|, R(|\mathcal{D}|))$ is added to another dataset, denoted by $\mathcal{R}$, which is fed to a regression model that infers the response time of~\AlgoName. Remember that the computational complexity of GP inference scales as $\mathcal{O}(|\mathcal{D}|^3)$. Therefore, we conduct a 3rd-degree polynomial regression on the dataset $\mathcal{R}$ to estimate the response time $R(|\mathcal{D}|)$ used in \AlgoName.

\section{Numerical Results} \label{sec:results}

In this section, we evaluate \AlgoName\ against the state-of-the-art of TVBO. All algorithms in the TVBO literature that use an infinite variational budget for the objective function $f$ are considered, namely ABO~\citep{nyikosa2018bayesian}, ET-GPUCB~\citep{brunzemaevent}, W-DBO~\citep{bardou2024too}, TV-GPUCB and R-GPUCB~\citep{bogunovic2016time}. As a control solution, we also include the vanilla GPUCB algorithm~\citep{gpucb}.

\subsection{Experimental Setting} \label{sec:results-setting}

We evaluate the TVBO algorithms on a set of 10 synthetic and real-world benchmarks, which are thoroughly described in Appendix~\ref{app:benchmarks}. Some can be viewed as either quasi-static or as having a near-constant reponse time (see Section~\ref{sec:background} for definitions of these types of problems). Each benchmark is a $(d+1)$-dimensional function to optimize. The first $d$ dimensions make up the spatial domain $\mathcal{S}$, which is normalized in $[0, 1]^d$. The $(d+1)$th dimension is the time dimension.

Each experiment begins with a warm-up phase consisting of 15 random queries. Then, at each iteration, the TVBO algorithms must perform the following tasks in real-time:
\begin{itemize}
    \item \textbf{Hyperparameters inference}: the variance $\lambda$, the spatial lengthscale $l_S$ and the observational noise $\sigma^2_0$ must always be inferred. When applicable, the temporal lengthscale $l_T$ and the response time $R(n)$ of the algorithm are also inferred at this stage. Unless requested otherwise by the algorithm, each solution uses a Matérn-5/2 spatial covariance function, and a Matérn-3/2 temporal covariance function.
    \item \textbf{Optimization of the acquisition function}: the acquisition function is always GPUCB~\citep{gpucb}, and the next observation $\bm x_t$ is found by using multi-start gradient ascent.
    \item \textbf{Function observation}: a noisy function value $y_i = f(\bm x_i, t_i) + \epsilon$ is observed, and the observation $(\bm x_i, t_i, y_i)$ is added to the dataset $\mathcal{D}$.
    \item \textbf{Dataset cleaning}: when applicable, this is the stage in which some observations in $\mathcal{D}$ might be removed.
\end{itemize}

Each TVBO algorithm has been implemented with the same BO framework, namely BOTorch~\citep{balandat2020botorch}. Moreover, for the sake of a fair evaluation, the critical, time-consuming stages (i.e., hyperparameters inference and acquisition function optimization) are performed using the same BOTorch routines. Finally, all experiments have been independently replicated 10 times on a laptop equipped with an Intel Core i9-9980HK @ 2.40 GHz with 8 cores (16 threads). No graphics card was used to speed up GP inference.

\subsection{Experimental Results} \label{sec:results-all}

The average regrets (and their standard errors) for each TVBO algorithm on each benchmark are listed in Table~\ref{tab:eval}. \AlgoName\ performs consistently well: it is either the best or second-best performing TVBO algorithm on each benchmark, leading to a high average performance across all benchmarks, as reported in the last row of Table~\ref{tab:eval} and visualized in Figure~\ref{fig:agg_perf}.

\begin{table*}[t]
\caption{Comparison of \AlgoName\ against state-of-the-art TVBO algorithms. For each experiment and each algorithm, the average regret and its standard error over 10 independent experiments is provided (lower is better). For each experiment, the performance of the best algorithm is written \textbf{in bold}, and the performance of algorithms whose confidence intervals overlap the confidence interval of the best performing algorithm are \underline{underlined}.}
\label{tab:eval}
\vskip 0.15in
\begin{sc}
\begin{center}
\begin{tabular}{cccccccc}
\toprule
Benchmark & \multirow{2}*{GPUCB} & ET- & R- & TV- & \multirow{2}*{ABO} & \multirow{2}*{W-DBO} & \multirow{2}*{\AlgoName}\\
($d+1$) & & GPUCB & GPUCB & GPUCB & & &\\
\midrule
Shekel & \underline{2.57} & \underline{2.57} & 2.58 & 2.62 & 2.67 & 2.64 & \textbf{2.51}\\
(4) & \underline{$\pm$0.05} & \underline{$\pm$0.04} & $\pm$0.02 & $\pm$0.05 & $\pm$0.11 & $\pm$0.04 & $\mathbf{\pm}$\textbf{0.02}\\
\midrule
Hartmann & 1.58 & 1.43 & 0.70 & 0.77 & 0.91 & \underline{0.63} & \textbf{0.60} \\
(3) & $\pm$0.08 & $\pm$0.12 & $\pm$0.05 & $\pm$0.10 & $\pm$0.12 & \underline{$\pm$0.04} & $\mathbf{\pm}$\textbf{0.04} \\
\midrule
Ackley & 4.26 & 4.39 & 4.28 & 3.83 & 4.75 & \underline{3.42} & \textbf{2.92} \\
(4) & $\pm$0.50 & $\pm$0.45 & $\pm$0.20 & $\pm$0.57 & $\pm$0.53 & \underline{$\pm$0.46} & $\mathbf{\pm}$\textbf{0.35} \\
\midrule
Griewank & \underline{0.59} & 0.61 & 0.61 & 0.66 & 0.69 & \textbf{0.57} & \underline{0.58} \\
(6) & \underline{$\pm$0.01} & $\pm$0.01 & $\pm$0.01 & $\pm$0.02 & $\pm$0.03 & $\mathbf{\pm}$\textbf{0.03} & \underline{$\pm$0.02} \\
\midrule
Eggholder & 517 & 522 & 298 & 305 & 546 & \underline{277} & \textbf{262} \\
(2) & $\pm$16.4 & $\pm$13.2 & $\pm$4.3 & $\pm$13.3 & $\pm$109.9 & \underline{$\pm$12.4} & $\mathbf{\pm}$\textbf{7.1} \\
\midrule
Schwefel & 590 & \underline{589} & 1024 & 726 & 792 & 615 & \textbf{523} \\
(4) & $\pm$20.9 & \underline{$\pm$35.3} & $\pm$10.3 & $\pm$34.1 & $\pm$61.3 & $\pm$21.5 & $\mathbf{\pm}$\textbf{30.3} \\
\midrule
Hartmann & 1.67 & 1.75 & 1.44 & \underline{0.96} & 1.32 & \textbf{0.70} & \underline{0.72} \\
(6) & $\pm$0.01 & $\pm$0.07 & $\pm$0.02 & \underline{$\pm$0.24} & $\pm$0.17 & $\mathbf{\pm}$\textbf{0.03} & \underline{$\pm$0.06} \\
\midrule
Powell & 3429 & 3283 & 1422 & 1301 & 6721 & 1066 & \textbf{547} \\
(4) & $\pm$153 & $\pm$240 & $\pm$64 & $\pm$155 & $\pm$2113 & $\pm$149 & $\mathbf{\pm}$\textbf{166} \\
\midrule
Temperature & 1.12 & 1.29 & \textbf{0.70} & 1.25 & 1.41 & 0.98 & \underline{0.72} \\
(3) & $\pm$0.05 & $\pm$0.11 & $\mathbf{\pm}$\textbf{0.01} & $\pm$0.10 & $\pm$0.26 & $\pm$0.06 & \underline{$\pm$0.03} \\
\midrule
WLAN & 20.2 & 21.2 & 10.4 & \underline{7.8} & 17.3 & 8.9 & \textbf{7.7} \\
(7) & $\pm$1.58 & $\pm$1.05 & $\pm$0.16 & \underline{$\pm$0.34} & $\pm$1.93 & $\pm$0.20 & $\mathbf{\pm}$\textbf{0.15} \\
\midrule
\multirow{2}*{\textbf{Overall}} & 0.62 & 0.67 & 0.37 & 0.38 & 0.82 & 0.19 & \textbf{0.02} \\
 & $\pm$0.10 & $\pm$0.10 & $\pm$0.10 & $\pm$0.08 & $\pm$0.08 & $\pm$0.08 & $\mathbf{\pm}$\textbf{0.01} \\
\bottomrule
\end{tabular}
\end{center}
\vskip -0.1in
\end{sc}
\end{table*}

\begin{figure}[t]
    \centering
    \includegraphics[height=5.2cm]{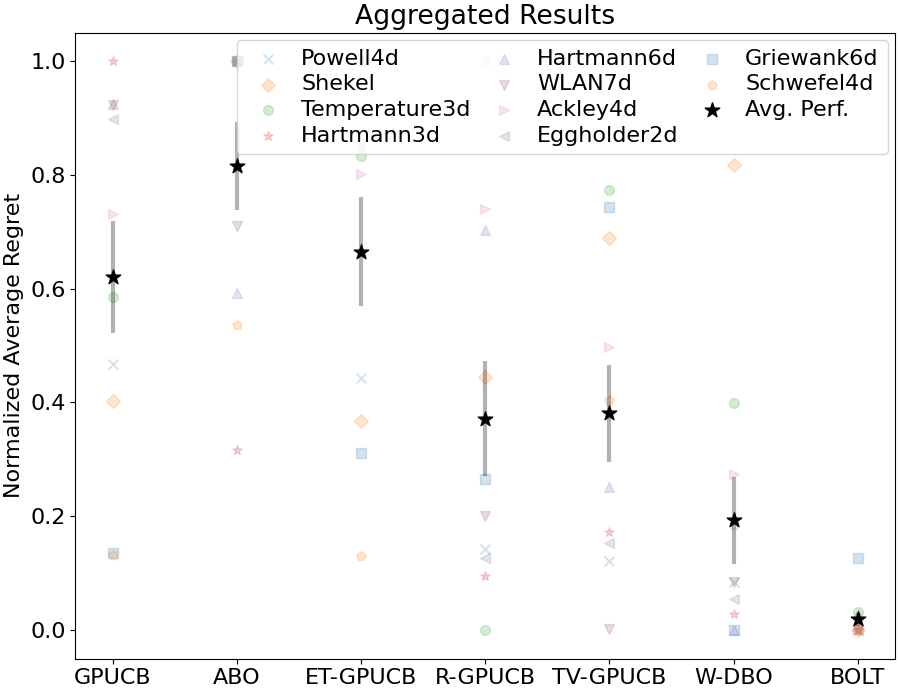}
    \caption{Normalized average regret across the benchmarks (lower is better). For each benchmark, the best performing TVBO algorithm gets a normalized regret of 0, and the worst performing TVBO algorithm gets a normalized regret of 1. The normalized regrets are then averaged across all the benchmarks.}
    \label{fig:agg_perf}
\end{figure}

This is largely explained by the fact that \AlgoName\ is able to adapt to the landscape of the objective function. As an example, Figure~\ref{fig:ex_dataset_evolution} shows the evolution of the dataset size of each TVBO algorithm for the Eggholder and Powell synthetic functions.

When optimizing Eggholder (left plot of Figure~\ref{fig:ex_dataset_evolution}), \AlgoName\ removes a significant number of observations. This is expected because Eggholder is an erratic function with multiple local optima, thus an observation rapidly becomes irrelevant to predict the future behavior of the objective function. This gives a significant advantage to TVBO algorithms that are able to remove observations on the fly (R-GPUCB, W-DBO, \AlgoName), as illustrated by the corresponding results in Table~\ref{tab:eval}.

In contrast, when optimizing Powell (right plot of Figure~\ref{fig:ex_dataset_evolution}), \AlgoName\ is able to adapt its policy and behaves similarly to algorithms that never remove an observation from their datasets (such as TV-GPUCB). This is also expected behavior, because Powell is much smoother than Eggholder, hence an observation remains relevant to predict the future behavior of the objective function even after a long period of time. The corresponding results reported in Table~\ref{tab:eval} indicate that \AlgoName\ performs significantly better than TV-GPUCB, although their dataset sizes are roughly similar. This can be explained by the superior quality of the surrogate model of \AlgoName, which is governed by the mild Assumptions~\ref{ass:gp}, \ref{ass:separability} and \ref{ass:kernel_props} only. In contrast, GPUCB does not capture temporal dynamics, and TV-GPUCB does so under more restrictive assumptions (Markovian setting~\citep{bogunovic2016time}).

\begin{figure}[t]
    \centering
    \includegraphics[height=4.6cm]{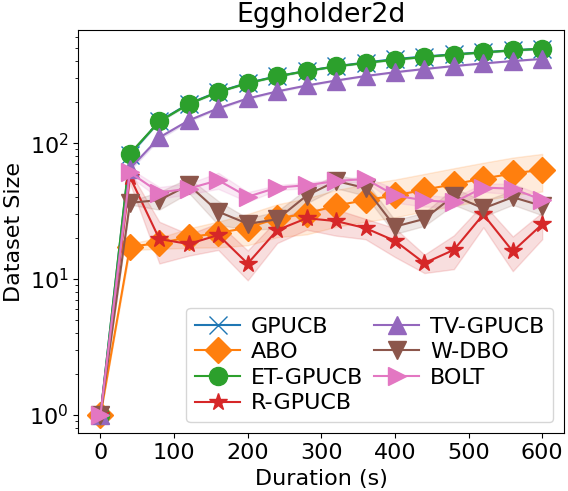} \quad
    \includegraphics[height=4.6cm]{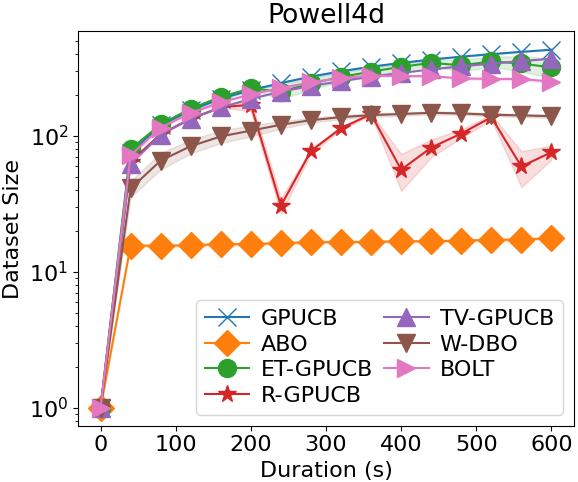}
    \caption{Evolution of the dataset sizes $n$ of the TVBO algorithms on the Eggholder~(left) and Powell~(right) synthetic functions. The plots are in log scale.}
    \label{fig:ex_dataset_evolution}
\end{figure}

Finally, some of our benchmarks can be viewed as quasi-static (e.g., Temperature) or as having a near-constant response time (e.g., Shekel), as defined in Section~\ref{sec:background} and reported in Table~\ref{tab:parameters} from Appendix~\ref{app:benchmarks}. As expected, the performance of all TVBO algorithms is pretty similar for near-constant problems (except for ET-GPUCB that encounters problems with on-the-fly hyperparameter inference as discussed by the authors in~\cite{brunzemaevent}). On quasi-static problems, TVBO algorithms that do remove observations (R-GP-UCB, W-DBO, BOLT) are the best-performing ones. Table~\ref{tab:eval} clearly shows that, regardless of the nature of the problem (quasi-static, near-constant, miscellaneous), BOLT is the only algorithm able to consistently achieve competitive or superior performance.

\section{Conclusion}

In this paper, we proposed a regret analysis that, unlike most existing results in the literature, explicitly incorporates the effect of the sampling frequency in TVBO, which naturally decreases as the dataset grows. Our main contribution, Theorem~\ref{thm:upper_regret_bound}, establishes an asymptotic upper bound on the regret of any TVBO algorithm that controls its dataset size by removing observations on the fly. This result aligns with existing bounds in the static BO literature~\citep{gpucb} and the TVBO literature~\citep{bogunovic2016time}, while also enabling principled recommendations for observation-removal policies in TVBO. To demonstrate its practical impact, we introduced \AlgoName, a TVBO algorithm designed according to these recommendations, and showed that it consistently outperforms the state of the art on a diverse set of synthetic and real-world benchmarks.

In future work, we plan to study the impact of sampling frequency on TVBO algorithms that relax Assumptions~\ref{ass:separability} and~\ref{ass:kernel_props}. As an example, considering a non-separable spatio-temporal covariance function seems to be a particularly interesting research avenue since it allows the TVBO algorithm to encode more complex spatio-temporal dynamics.

\bibliography{main}
\bibliographystyle{iclr2026_conference}

\newpage
\appendix
\section{Upper Regret Bound without a Metronome} \label{app:dataset_size}

In this appendix, we provide all the details required to prove Theorem~\ref{thm:upper_regret_bound}. For the sake of completeness, we start by deriving the usual instantaneous regret bound provided in most regret proofs that involve GPUCB~\citep{gpucb, bogunovic2016time}.

\begin{lemma} \label{lem:immediate_regret_bound}
    Let $r_{n} = f(\bm x^*_{n}, t_n) - f(\bm x_{n}, t_n)$ be the instantaneous regret at the $n$-th iteration of the TVBO algorithm $\mathcal{A}$, where $\bm x^*_{n} = \argmax_{\bm x \in \mathcal{S}} f(\bm x, t_n)$, $\bm x_{n} = \argmax_{\bm x \in \mathcal{S}} \alpha_\mathcal{D}(\bm x, t_n)$, $\alpha_\mathcal{D}$ is GPUCB computed on $\mathcal{D}$ and where $\mathcal{D}$ is a dataset of observations collected with the distribution $\mu$. Pick $\delta \in (0, 1)$, then with probability at least $1 - \delta$,
    \begin{equation}
        r_n \leq 2 \beta_{n}^{1/2} \sigma_\mathcal{D}(\bm x_{n}, t_n) + \frac{1}{n^2}
    \end{equation}
    where $\sigma_\mathcal{D}(\bm x, t)$ is the posterior standard deviation (see~\eqref{eq:post_var}) and where
    \begin{equation} \label{eq:beta}
        \beta_n = 2d\log\left(b d n^2\sqrt{\log(da\pi^2n^2/3\delta)} / 6\delta\right) + 4\log(\pi n).
    \end{equation}
\end{lemma}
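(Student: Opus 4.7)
The proof will follow the now-standard template introduced by Srinivas et al.\ for GP-UCB analyses, with the only subtlety being that we condition on the spatial slice at time $t_n$: since $f(\bm{x}, t_n) \mid \mathcal{D}$ is Gaussian by Assumption~\ref{ass:gp}, all the usual tail inequalities still apply pointwise in $\bm x$ at each fixed time $t_n$, and the time-varying nature enters only through the posterior mean and variance computed from the full space-time kernel.

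The plan is first to establish a pointwise concentration inequality. For any fixed $\bm x \in \mathcal{S}$, the posterior $f(\bm x, t_n) \mid \mathcal{D} \sim \mathcal{N}(\mu_\mathcal{D}(\bm x, t_n), \sigma^2_\mathcal{D}(\bm x, t_n))$ yields, by the standard Gaussian tail bound, $\Pr\bigl(|f(\bm x, t_n) - \mu_\mathcal{D}(\bm x, t_n)| > \beta_n^{1/2}\sigma_\mathcal{D}(\bm x, t_n)\bigr) \leq e^{-\beta_n/2}$. Next, I would discretize $\mathcal{S} = [0,1]^d$ into a uniform grid $\mathcal{S}_n$ of width $1/\tau_n$ with $\tau_n = Ldn^2$, so that $|\mathcal{S}_n| = \tau_n^d$ and every $\bm x \in \mathcal{S}$ has a nearest grid point $[\bm x]_n$ satisfying $\|\bm x - [\bm x]_n\|_2 \leq \sqrt{d}/\tau_n$. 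Applying a union bound over all $\tau_n^d$ points of $\mathcal{S}_n$ and all iterations $n \geq 1$ with per-iteration failure probability $6\delta/(\pi^2 n^2)$ (so the total failure probability is $\delta \sum_n 6/(\pi^2 n^2) = \delta$), the choice of $\beta_n$ in~\eqref{eq:beta} exactly guarantees that on the good event, for every $n$ and every $\bm x \in \mathcal{S}_n$, $|f(\bm x, t_n) - \mu_\mathcal{D}(\bm x, t_n)| \leq \beta_n^{1/2}\sigma_\mathcal{D}(\bm x, t_n)$.

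Then I would bridge the discretization gap using Assumption~\ref{ass:lipschitz}: for the time slice at $t_n$,
\begin{equation*}
    |f(\bm x^*_{t_n}, t_n) - f([\bm x^*_{t_n}]_n, t_n)| \leq L \sqrt{d}/\tau_n = 1/(dn^2) \leq 1/n^2.
\end{equation*}
Combining the high-probability upper bound at the grid point $[\bm x^*_{t_n}]_n$, the Lipschitz pass-through, and the definition $\bm x_{t_n} = \argmax_{\bm x \in \mathcal{S}} \bigl(\mu_\mathcal{D}(\bm x, t_n) + \beta_n^{1/2}\sigma_\mathcal{D}(\bm x, t_n)\bigr)$, which dominates the UCB value at $[\bm x^*_{t_n}]_n \in \mathcal{S}_n \subseteq \mathcal{S}$, chained with the lower concentration bound at $\bm x_{t_n}$, gives the usual telescoping
\begin{equation*}
    f(\bm x^*_{t_n}, t_n) - f(\bm x_{t_n}, t_n) \leq 2 \beta_n^{1/2} \sigma_\mathcal{D}(\bm x_{t_n}, t_n) + 1/n^2.
\end{equation*}

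The routine calculations are all standard; the only nontrivial bookkeeping is choosing the discretization scale $\tau_n$ and the per-iteration confidence level so that the resulting $\beta_n$ matches~\eqref{eq:beta}. I expect this constant-tracking to be the most error-prone step, but it is essentially dictated by the requirement that $e^{-\beta_n/2} \tau_n^d \leq 6\delta/(\pi^2 n^2)$, which with $\tau_n = Ldn^2$ yields the $2d\log(Ldn^2/6\delta) + 4\log(\pi n)$ form.
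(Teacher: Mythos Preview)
Your proposal is correct and follows essentially the same argument as the paper's proof: Gaussian tail bound, union bound over a grid $\mathcal{S}_n$ of size $(Ldn^2)^d$ and over all iterations $n$, Lipschitz bridging of $\bm x^*_{t_n}$ to its nearest grid point, and the standard UCB telescoping through $\bm x_{t_n}$. The only slip is the arithmetic $L\sqrt{d}/\tau_n = 1/(\sqrt{d}\,n^2)$ rather than $1/(dn^2)$, which does not affect the conclusion $\leq 1/n^2$; the paper instead uses the $\ell_1$ grid bound $\|\bm x - [\bm x]_n\|_1 \leq d/\tau_n$, which yields exactly $1/n^2$.
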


\begin{proof}
    For the sake of consistency with the literature, we will reuse, whenever appropriate, the notations in~\citet{bogunovic2016time}.
    
    Let us set a discretization $\mathcal{S}_n$ of the spatial domain $\mathcal{S} \subseteq [0, 1]^d$. $\mathcal{S}_n$ is of size $\tau_n^d$ and satisfies
    \begin{equation} \label{eq:grid_distance}
        ||\bm x - [\bm x]_n||_1 \leq \frac{d}{\tau_n}, \text{ } \forall \bm x \in \mathcal{S},
    \end{equation}
    where $[\bm x]_n = \argmin_{\bm s \in \mathcal{S}_n} ||\bm s - \bm x||_1$ is the closest point in $\mathcal{S}_n$ to $\bm x$. Note that a uniform grid on $\mathcal{S}$ ensures~\eqref{eq:grid_distance}.

    Let us now fix $\delta > 0$ and condition on a high-probability event. If $\beta_n = 2 \log \frac{\tau_n^d \pi^2 n^2}{3 \delta}$, then
    \begin{equation} \label{eq:concentration_bound}
        |f(\bm x, t_n) - \mu_{\mathcal{D}}(\bm x, t_n)| \leq \beta_n^{1/2} \sigma_{\mathcal{D}}(\bm x, t_n), \text{ } \forall n \in \mathbb{N}, \forall \bm x \in \mathcal{S}_n
    \end{equation}
    with probability at least $1 - \frac{\delta}{2}$. This directly comes from $f(\bm x, t_n) \sim \mathcal{N}\left(\mu_{\mathcal{D}}(\bm x, t_n), \sigma^2_{\mathcal{D}}(\bm x, t_n)\right)$ and from the Chernoff concentration inequality applied to Gaussian tails $\mathbb{P}\left(|f(\bm x, t_n) - \mu_\mathcal{D}(\bm x, t_n)| \leq \sqrt{\beta_n} \sigma_{\mathcal{D}}(\bm x, t_n)\right) \geq 1 - e^{-\beta_n / 2}$. Therefore, our choice of $\beta_n$ ensures that for a given $n$ and a given $\bm x \in \mathcal{S}_n$, $|f(\bm x, t_n) - \mu_{\mathcal{D}}(\bm x, t_n)| \leq \beta_n^{1/2} \sigma_{\mathcal{D}}(\bm x, t_n)$ occurs with a probability at least $1 - \frac{3 \delta}{\pi^2 n^2 \tau_n^d}$. The union bound taken over $n \in \mathbb{N}$ and $\bm x \in \mathcal{S}_n$ establishes~\eqref{eq:concentration_bound} with probability $1 - \frac{\delta}{2}$.

    In particular, setting $\tau_n = Ldn^2$ gives that for all $\bm x \in \mathcal{S}$ and all $t_n \in \mathcal{T}$,
    \begin{equation} \label{eq:distance_fvalues_discretized}
        |f(\bm x, t_n) - f([\bm x]_n, t_n)| \leq \frac{1}{n^2}
    \end{equation}
    with probability at least $1 - \frac{\delta}{2}$.

    Because $\tau_n = L_ndn^2$, $\beta_n$ becomes $2d\log\left(b\sqrt{\log(da\pi^2n^2/3\delta)}dn^2 / 6\delta\right) + 4\log(\pi n)$. Using the triangle inequality and combining~\eqref{eq:concentration_bound} with~\eqref{eq:distance_fvalues_discretized}, we get that with probability $1 - \delta$, $\bm x^*_{n} = \argmax_{\bm x \in \mathcal{S}} f(\bm x, t_n)$ satisfies
    \begin{align}
        |f(\bm x^*_{n}, t_n) - \mu_{\mathcal{D}}([\bm x^*_{n}]_n, t_n)| &\leq |f([\bm x^*_{n}]_n, t_n) - \mu_{\mathcal{D}}([\bm x^*_{n}]_n, t_n)| + |f(\bm x^*_{n}, t_n) - f([\bm x^*_{n}]_n, t_n)| \nonumber\\
        &\leq \beta_n^{1/2} \sigma_{\mathcal{D}}([\bm x^*_{n}]_n, t_n) + \frac{1}{n^2}. \label{eq:bound_fvalue_mean_discretized}
    \end{align}
    We can now upper bound the instantaneous regret of the TVBO algorithm $\mathcal{A}$:
    \begin{align}
        r_{n} &= f(\bm x^*_{n}, t_n) - f(\bm x_{n}, t_n) \nonumber\\
        &\leq \mu_{\mathcal{D}}([\bm x^*_{n}]_n, t_n) + \beta_t^{1/2} \sigma_{\mathcal{D}}([\bm x^*_{n}]_n, t_n) + \frac{1}{n^2} - f(\bm x_{n}, t_n) \label{eq:proof_discretized}\\
        &\leq \mu_{\mathcal{D}}(\bm x_{n}, t_n) + \beta_t^{1/2} \sigma_{\mathcal{D}}(\bm x_{n}, t_n) + \frac{1}{n^2} - f(\bm x_{n}, t_n) \label{eq:proof_acq_maximized}\\
        &\leq 2 \beta_t^{1/2} \sigma_{\mathcal{D}}(\bm x_{n}, t_n) + \frac{1}{n^2} \label{eq:proof_concentration},
    \end{align}
    where~\eqref{eq:proof_discretized} follows directly from~\eqref{eq:bound_fvalue_mean_discretized}, \eqref{eq:proof_acq_maximized} from the definition of $\bm x_{n} = \argmax_{\bm x \in \mathcal{S}} \alpha_\mathcal{D}(\bm x, t_n) = \argmax_{\bm x \in \mathcal{S}} \mu_{\mathcal{D}}(\bm x, t_n) + \beta_t^{1/2} \sigma_{\mathcal{D}}(\bm x, t_n)$ and~\eqref{eq:proof_concentration} from~\eqref{eq:concentration_bound}.
\end{proof}

Lemma~\ref{lem:immediate_regret_bound} shows that the posterior variance of the surrogate GP~\eqref{eq:post_var} plays a key role in the regret bound. Let us now derive a first bound on the cumulative regret $R_T$, by extending an original idea from~\citet{gpucb} to the time-varying setting.

\begin{lemma} \label{lem:first_generic_regret_bound}
    Let $R_T = \sum_{i =1}^T r_i$ be the cumulative regret of a TVBO algorithm with maximal dataset size $n$. Then,
    \begin{equation} \label{eq:first_generic_regret_bound}
        R_T \leq \sqrt{C_1T\beta_T\left(\gamma_n + \frac{1}{2} \sum_{i = n+1}^T \log(1 + \sigma^{-2}_0 \sigma^2_{\mathcal{D}_n}(\bm x_i, t_i))\right)} + \frac{\pi^2}{6}
    \end{equation}
   where $\mathcal{D}_i$ is a dataset containing $i$ observations,
    \begin{equation} \label{eq:c1_definition}
        C_1 = \frac{8}{\log(1 + \sigma^{-2}_0)},
    \end{equation}
    and where $\gamma_n$ is the mutual information $\gamma_n = \sum_{i = 1}^n \log(1 + \sigma^{-2}_0 \sigma_{\mathcal{D}_i}^2(\bm x_i, t_i))$.
\end{lemma}

\begin{proof}
    We have
    \begin{align}
        R_T &= \sum_{i = 1}^T r_i \nonumber\\
        &\leq 2 \left(\sum_{i = 1}^n \beta_i^{1/2} \sigma_{\mathcal{D}_i}(\bm x_i, t_i) + \sum_{i = n+1}^T \beta_i^{1/2} \sigma_{\mathcal{D}_n}(\bm x_i, t_i)\right) + \frac{\pi^2}{6} \label{eq:proof_use_immediate_reg_bound}\\
        &\leq \sqrt{4 T \beta_T \left(\sum_{i = 1}^n \sigma^2_{\mathcal{D}_i}(\bm x_i, t_i) + \sum_{i = n+1}^T \sigma^2_{\mathcal{D}_n}(\bm x_i, t_i)\right)} + \frac{\pi^2}{6} \label{eq:proof_use_cauchyschwarz}
    \end{align}
    where~\eqref{eq:proof_use_immediate_reg_bound} uses Lemma~\ref{lem:immediate_regret_bound}, the solution to the Basel problem $\sum_{i = 1}^T i^{-2} \leq \sum_{i = 1}^\infty i^{-2} = \pi^2/6$ and the fact that the TVBO algorithm has a maximal dataset size $n$ while~\eqref{eq:proof_use_cauchyschwarz} uses the Cauchy-Schwarz inequality and $\beta_i \leq \beta_T$ for any $i \leq T$.

    Going further, we get
    \begin{align}
        R_T &\leq \sqrt{4\sigma^2_0 T \beta_T \left(\sum_{i = 1}^n \sigma^{-2}_0 \sigma^2_{\mathcal{D}_i}(\bm x_i, t_i) + \sum_{i = n+1}^T \sigma^{-2}_0 \sigma^2_{\mathcal{D}_n}(\bm x_i, t_i)\right)} + \frac{\pi^2}{6} \nonumber\\
        &\leq \sqrt{\frac{8}{\log(1 + \sigma^{-2}_0)} T \beta_T \left(\frac{1}{2}\sum_{i = 1}^n \log(1 + \sigma^{-2}_0 \sigma^2_{\mathcal{D}_i}(\bm x_i, t_i)) + \frac{1}{2}\sum_{i = n+1}^T \log(1 + \sigma^{-2}_0 \sigma^2_{\mathcal{D}_n}(\bm x_i, t_i))\right)} + \frac{\pi^2}{6} \label{eq:proof_use_log_identity}\\
        &= \sqrt{C_1 T \beta_T \left(\gamma_n + \frac{1}{2} \sum_{i = n+1}^T \log(1 + \sigma^{-2}_0 \sigma^2_{\mathcal{D}_n}(\bm x_i, t_i))\right)} + \frac{\pi^2}{6} \label{eq:proof_use_c1_gamma_def}
    \end{align}
    where~\eqref{eq:proof_use_log_identity} uses the identity $z^2 \leq \sigma^{-2}_0 \log(1 + z^2)/\log(1 + \sigma^{-2}_0)$ for $z^2 \in [0, \sigma^{-2}_0]$ and where~\eqref{eq:proof_use_c1_gamma_def} uses the definition of mutual information $\gamma_n$ and $C_1$.
\end{proof}

We now provide an upper bound for the posterior variance $\sigma^2_{\mathcal{D}}(\bm x, t)$.

\begin{lemma} \label{lem:bound_on_variance}
    Let $\mathcal{A}$ be a TVBO algorithm, $\mathcal{D}$ be its dataset of observations with maximal size $n$ and $R(n)$ be its response time. Then,
    \begin{equation} \label{eq:post_cov_upper_bound}
        \sigma_{\mathcal{D}}^2(\bm x, t) \leq 1 - C_2 ||\bm u_n||_2^2,
    \end{equation}
    where $C_2 = \min_{\bm x, \bm x' \in \mathcal{S}} k_S^2(\bm x, \bm x') / (\max_{\omega \in \mathbb{R}} S_T(\omega) / R(0) + \sigma^2_0)$, $S_T$ is the spectral density associated with $k_T$ and where
    \begin{equation*}
        \bm u_n = \left(k_T(R(n)), k_T(2R(n)), \cdots, k_T(nR(n))\right).
    \end{equation*}
\end{lemma}

\begin{proof}
    Let us start by recalling the definition of the posterior variance in~\eqref{eq:post_var}:
    \begin{align}
        \sigma^2_{\mathcal{D}}(\bm x, t) &= k((\bm x, t), (\bm x, t)) - \bm k^\top((\bm x, t), \mathcal{D}) \bm \Delta^{-1} \bm k((\bm x, t), \mathcal{D}) \nonumber\\
        &= 1 - \bm k^\top((\bm x, t), \mathcal{D}) \bm \Delta^{-1} \bm k((\bm x, t), \mathcal{D}) \label{eq:proof_alternative_post_var}
    \end{align}
    where~\eqref{eq:proof_alternative_post_var} comes from Assumptions~\ref{ass:separability} and~\ref{ass:kernel_props}.

    Finding an upper bound on $\sigma^2_{\mathcal{D}}(\bm x, t)$ boils down to finding a lower bound on the quadratic form $q_n(\bm x, t) = \bm k^\top((\bm x, t), \mathcal{D}) \bm \Delta^{-1} \bm k((\bm x, t), \mathcal{D})$. Let us first consider the kernel vector $\bm k((\bm x, t), \mathcal{D})$. Because the dataset $\mathcal{D} = \left\{\left(\bm x_1, t_1, y_1\right), \cdots, \left(\bm x_n, t_n, y_n\right)\right\}$ has a fixed maximal dataset size $n$, for every iteration $m > 2n$:
    \begin{equation} \label{eq:proof_explicit_kernel_vector}
        \bm k^\top((\bm x, t), \mathcal{D}) = \left(k_S(\bm x, \bm x_n) k_T(R(n)), \cdots, k_S(\bm x, \bm x_1) k_T(nR(n))\right).
    \end{equation}

    The expression in~\eqref{eq:proof_explicit_kernel_vector} is due to $\mathcal{A}$ having a constant response time $R(n)$ with a dataset size $n$. In such a dataset $\mathcal{D} = \left\{(\bm x_i, t_i, y_i)\right\}_{i \in [n]}$, the temporal input $t_i$ is deterministic. In fact, during the first $n$ iterations, $\mathcal{A}$ fills its dataset and has a response time that goes from $R(1)$ to $R(n)$. Next, $\mathcal{A}$ samples at a constant response time $R(n)$ and the first observations are progressively deleted. After $n$ additional iterations, the kernel vector $\bm k^\top((\bm x, t), \mathcal{D})$ verifies~\eqref{eq:proof_explicit_kernel_vector}.

    Then, consider $\bm \Delta = \bm k(\mathcal{D}, \mathcal{D}) + \sigma^2_0 \bm I$. Because it is positive definite, its spectral decomposition $\bm Q \bm \Lambda \bm Q^\top$ exists, where $\bm Q = \left(\bm \phi_1, \cdots, \bm \phi_n\right)$ is the orthogonal matrix whose columns are the eigenvectors of $\bm \Delta$, and $\bm \Lambda = \text{diag}\left(\lambda_1 + \sigma^2_0, \cdots, \lambda_n + \sigma^2_0\right)$ is the diagonal matrix comprising the associated eigenvalues.

    The quadratic form $q_n(\bm x, t)$ becomes
    \begin{align}
        q_n(\bm x, t) &= \bm k^\top((\bm x, t), \mathcal{D}) \bm Q \bm \Lambda^{-1} \bm Q^\top \bm k((\bm x, t), \mathcal{D}) \nonumber\\
        &= \bm v^\top \bm \Lambda^{-1} \bm v \nonumber\\
        &= \sum_{i = 1}^n \frac{v_i^2}{\lambda_i + \sigma^2_0} \nonumber\\
        &\geq \frac{1}{\lambda_1 + \sigma^2_0} \sum_{i = 1}^n \sum_{j = 1}^n \sum_{k = 1}^n k((\bm x, t), (\bm x_j, t_j)) \phi_{ij} \phi_{ik} k((\bm x, t), (\bm x_k, t_k)) \label{eq:proof_quadratic_form_sum}
    \end{align}
    where $\bm v = \bm Q^\top \bm k((\bm x, t), \mathcal{D})$ while \eqref{eq:proof_quadratic_form_sum} is the quadratic form $q_n(\bm x, t)$ rewritten with sums instead of matrix products and using the fact that $\lambda_1 \geq \lambda_i$ for any $i \in [n]$. Furthermore, note that under Assumptions~\ref{ass:separability} and~\ref{ass:kernel_props}, the largest eigenvalue $\lambda_1$ of the Gram matrix $\bm K$ built from observations collected at frequency $1/R(n)$ can be bounded from above by $\frac{1}{R(n)} \max_{\omega \in \mathbb{R}} S_T(\omega)$, as recently observed by~\cite{bardou2025asymptotic}.

    Let $\bar \lambda = \frac{1}{R(0)} \max_{\omega \in \mathbb{R}} S_T(\omega) > \frac{1}{R(n)} \max_{\omega \in \mathbb{R}} S_T(\omega)$ when $R(n)$ follows Definition~\ref{def:response_time}. Then, rearranging the sums, we have
    \begin{align}
        q_n(\bm x, t) &\geq \frac{1}{\bar \lambda + \sigma^2_0} \sum_{j = 1}^n \sum_{k = 1}^n k((\bm x, t), (\bm x_j, t_j)) k((\bm x, t), (\bm x_k, t_k)) \underbrace{\sum_{i = 1}^n \phi_{ij} \phi_{ik}}_{\delta_{jk}} \nonumber\\
        &= \frac{1}{\bar \lambda + \sigma^2_0} \sum_{i = 1}^n k^2((\bm x, t), (\bm x_i, t_i)) \label{eq:proof_orthogonality_qf}\\
        &= \frac{1}{\bar \lambda + \sigma^2_0} \sum_{i = 1}^n k_S^2(\bm x, \bm x_i) k_T^2(t, t_i) \label{eq:proof_qf_kernel_expansion}
    \end{align}
    where $\delta_{jk}$ is the Kronecker delta with value $1$ if $j = k$ and $0$ otherwise, \eqref{eq:proof_orthogonality_qf} is due to the orthonormality of the eigenvectors of $\bm K$ and \eqref{eq:proof_qf_kernel_expansion} is due to Assumption~\ref{ass:separability}.

    Let $\kappa = \min_{\bm x, \bm x' \in \mathcal{S}} k_S(\bm x, \bm x')$. Then, $\kappa > 0$ as per Assumption~\ref{ass:kernel_props} and we finally have
    \begin{align*}
        q_n(\bm x, t) &\geq \frac{\kappa^2}{\bar \lambda + \sigma^2_0} \sum_{i = 1}^n k_T^2(t, t_i)\\
        &= C_2 ||\bm u_n||^2_2,
    \end{align*}
    where $C_2 = \kappa^2 / (\bar \lambda + \sigma^2_0)$.
    
    Noting that $\sigma^2_\mathcal{D}(\bm x, t) = 1 - q_n(\bm x, t) \leq 1 - C_2 ||\bm u_n||_2^2$ concludes the proof.
\end{proof}

Finally, let us prove Theorem~\ref{thm:upper_regret_bound}.

\begin{proof}
    Combining Lemmas~\ref{lem:first_generic_regret_bound} and~\ref{lem:bound_on_variance}, we get
    \begin{align}
        R_T &\leq \sqrt{C_1T\beta_T\left(\gamma_n + \frac{1}{2} \sum_{i = n+1}^T \log(1 + \sigma^{-2}_0 \left(1 - C_2 ||\bm u_n||_2^2\right)\right)} + \frac{\pi^2}{6} \nonumber\\
        &\leq \sqrt{C_1T\beta_T\left(\gamma_n + \frac{\sigma^{-2}_0}{2} (T-n) \left(1 -  C_2 ||\bm u_n||_2^2\right)\right)} + \frac{\pi^2}{6} \label{eq:proof_use_log_1x_leq_x},
    \end{align}
    where~\eqref{eq:proof_use_log_1x_leq_x} uses $\log(1 + x) \leq x$ for any $x > 0$.
\end{proof}

\section{Recommended Stale Data Management Policy} \label{app:recommended_stale}

In this appendix, we prove Theorem~\ref{thm:acquisition_wasserstein} by connecting the $L^2$-distance between Lipschitz-continuous acquisition functions and the integrated 2-Wasserstein distance between GP posteriors.

\begin{proof}
    Let us start by considering the effect of Assumption~\ref{ass:acquisition_lipschitz} on the distance at a single point $(\bm x, t) \in \mathcal{S} \times \mathcal{T}$.
    \begin{align}
        |\alpha_{\mathcal{D}}(\bm x, t) - \alpha_{\Tilde{\mathcal{D}}}(\bm x, t)| &\leq L_T ||\left(\mu_{\mathcal{D}}(\bm x, t) - \mu_{\Tilde{\mathcal{D}}}(\bm x, t), \sigma_{\mathcal{D}}(\bm x, t) - \sigma_{\Tilde{\mathcal{D}}}(\bm x, t)\right)||_2 \nonumber\\
        &= L_T \sqrt{\left(\mu_{\mathcal{D}}(\bm x, t) - \mu_{\Tilde{\mathcal{D}}}(\bm x, t)\right)^2 + \left(\sigma_{\mathcal{D}}(\bm x, t) - \sigma_{\Tilde{\mathcal{D}}}(\bm x, t)\right)^2}. \label{eq:proof_acq_single_point_distance}
    \end{align}

    We can now bound the squared $L^2$ distance between the two acquisition functions $\alpha_{\mathcal{D}}$ and $\alpha_{\Tilde{\mathcal{D}}}$ on any subset $\mathcal{S}' \times \mathcal{T}'$ of the spatio-temporal domain $\mathcal{S} \times \mathcal{T}$:
    \begin{align}
        ||\alpha_{\mathcal{D}} - \alpha_{\Tilde{\mathcal{D}}}||^2_2 &= \oint_{\mathcal{S}'} \int_{\mathcal{T}'} \left(\alpha_{\mathcal{D}}(\bm x, t) - \alpha_{\Tilde{\mathcal{D}}}(\bm x, t)\right)^2d\bm x dt \nonumber\\
        &\leq \oint_{\mathcal{S}'} \int_{\mathcal{T}'} L_T^2 \left(\left(\mu_{\mathcal{D}}(\bm x, t) - \mu_{\Tilde{\mathcal{D}}}(\bm x, t)\right)^2 + \left(\sigma_{\mathcal{D}}(\bm x, t) - \sigma_{\Tilde{\mathcal{D}}}(\bm x, t)\right)^2\right) d\bm x dt \label{eq:proof_application_acq_single_point}\\
        &= L_T^2 \oint_{\mathcal{S}'} \int_{\mathcal{T}'} \left(\left(\mu_{\mathcal{D}}(\bm x, t) - \mu_{\Tilde{\mathcal{D}}}(\bm x, t)\right)^2 + \left(\sigma_{\mathcal{D}}(\bm x, t) - \sigma_{\Tilde{\mathcal{D}}}(\bm x, t)\right)^2\right) d\bm x dt \nonumber\\
        &= L_T^2 W_2^2(\mathcal{GP}_{\mathcal{D}}, \mathcal{GP}_{\Tilde{\mathcal{D}}}), \label{eq:proof_distance_acq}
    \end{align}
    where~\eqref{eq:proof_application_acq_single_point} is due to~\eqref{eq:proof_acq_single_point_distance} and~\eqref{eq:proof_distance_acq} comes from the definition of the integrated 2-Wasserstein distance on the domain $\mathcal{S}' \times \mathcal{T}'$. Applying a square-root on both sides yields the desired result.
\end{proof}

\section{Numerical Results} \label{app:benchmarks}

Here, we provide a detailed description of each implemented benchmark and the associated figures. There are two figures associated with each benchmark, showing their average regrets and the size of their datasets throughout the experiment.

In the following, the synthetic benchmarks will be described as functions of a point $\bm z$ in the $d+1$-dimensional spatio-temporal domain $\mathcal{S} \times \mathcal{T}$. More precisely, the point $\bm z$ is explicitly given by $\bm z = (x_1, \cdots, x_d, t)$. Also, we will write $d' = d+1$ for the sake of brevity.

Each benchmark comes with a time horizon $H$. To make the benchmarks noisy, each call to the objective function is perturbed with a centered Gaussian noise of variance $\sigma^2_0$, equal to $1\%$ of the signal variance. Moreover, to control the expensiveness of the benchmarks, each call to the objective function takes $R(0)$ seconds to complete. The values for $H$, $\sigma^2_0$ and $R(0)$ are unknown to the evaluated TVBO algorithms but are provided in Table~\ref{tab:parameters} for the sake of completeness.

\begin{table}[t]
\caption{Noise variance $\sigma^2_0$, cost of single call $R(0)$ in seconds, time horizon $H$ in minutes and approximated temporal lengthscale $l_T$ in minutes for each benchmark. The table also indicates if each benchmark can be viewed as quasi-static ($l_T \gg R(0)$) or if it can be viewed as near-constant ($R(0) \sim R(\lfloor H/ R(0)\rfloor)$).}
\label{tab:parameters}
\vskip 0.15in
\begin{center}
\begin{tabular}{ccccccc}
\toprule
Benchmark & Observ. & Cost & Horizon & Lengthscale & Quasi- & Near-\\
$(d+1)$ & Noise $\sigma^2_0$ & $R(0)$ (s) & $H$ (s) & $l_T$ (s) & Static & Constant\\

\midrule
Shekel (4) & 0.02 & 8.00 & 600 & 174 & No & Yes\\
Hartmann (3) & 0.05 & 8.00 & 600 & 164 & No & Yes\\
Ackley (4) & 0.05 & 0.05 & 600 & 216 & Yes & No\\
Griewank (6) & 0.30 & 0.05 & 600 & 55 & No & No\\
Eggholder (2) & 0.10 & 0.05 & 600 & 22 & No & No\\
Schwefel (4) & 0.25 & 0.05 & 600 & 35 & No & No\\
Hartmann (6) & 0.05 & 0.10 & 600 & 270 & Yes & No\\
Powell (4) & 2.50 & 0.01 & 600 & 552 & Yes & No\\
\midrule
Temperature (3) & 0.16 & 0.01 & 1800 & 396 & Yes & No\\
WLAN (7) & 1.50 & 0.10 & 600 & 120 & No & No\\
\bottomrule
\end{tabular}
\end{center}
\vskip -0.1in
\end{table}

\begin{figure}[t]
    \centering
    \includegraphics[height=5cm]{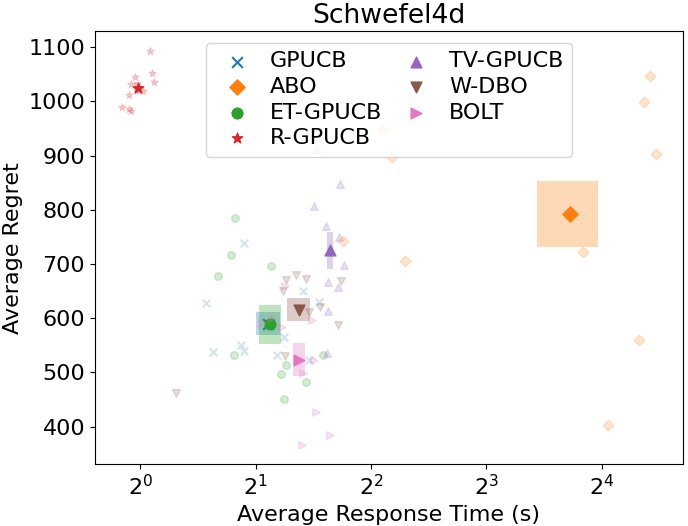} \quad
    \includegraphics[height=5cm]{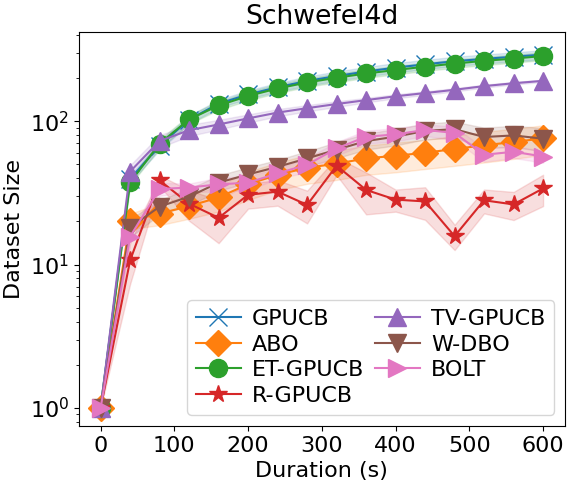}
    \caption{(Left) Average response time and average regrets of the TVBO solutions during the optimization of the Schwefel synthetic function. (Right)~Dataset sizes of the TVBO solutions during the optimization of the Schwefel synthetic function.}
    \label{fig:schwefel}
\end{figure}

\paragraph{Schwefel.} The Schwefel function is $d'$-dimensional, and has the form
\begin{equation*}
    f(\bm z) = 418.9829d' - \sum_{i = 1}^{d'} z_i \sin\left(\sqrt{|z_i|}\right).
\end{equation*}

For the numerical evaluation, we set $d' = 4$ and we optimized the function on the domain $[-500, 500]^{d'}$. The results are provided in Figure~\ref{fig:schwefel}.

\begin{figure}[t]
    \centering
    \includegraphics[height=5cm]{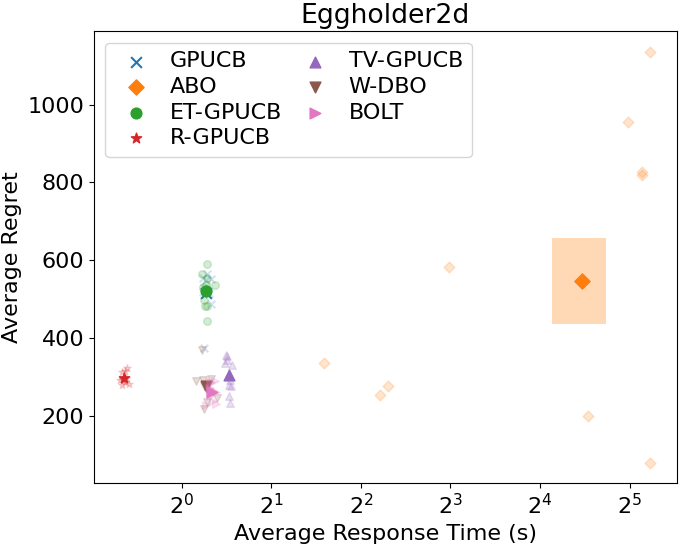} \quad
    \includegraphics[height=5cm]{rsc/Eggholder_Dataset.png}
    \caption{(Left) Average response time and average regrets of the TVBO solutions during the optimization of the Eggholder synthetic function. (Right)~Dataset sizes of the TVBO solutions during the optimization of the Eggholder synthetic function.}
    \label{fig:eggholder}
\end{figure}

\paragraph{Eggholder.} The Eggholder function is $2$-dimensional, and has the form
\begin{equation*}
    f(\bm z) = -(z_2 + 47)\sin\left(\sqrt{|z_2 + \frac{z_1}{2} + 47|}\right) - z_1 \sin\left(\sqrt{|z_1 - z_2 - 47|}\right).
\end{equation*}

For the numerical evaluation, we optimized the function on the domain $[-512, 512]^2$. The results are provided in Figure~\ref{fig:eggholder}.

\begin{figure}[t]
    \centering
    \includegraphics[height=5cm]{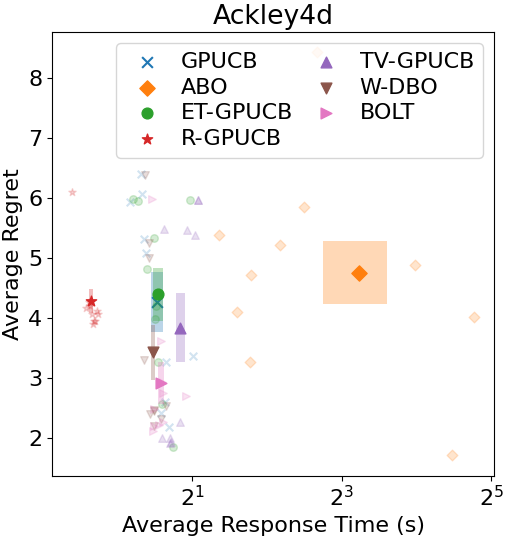} \quad
    \includegraphics[height=5cm]{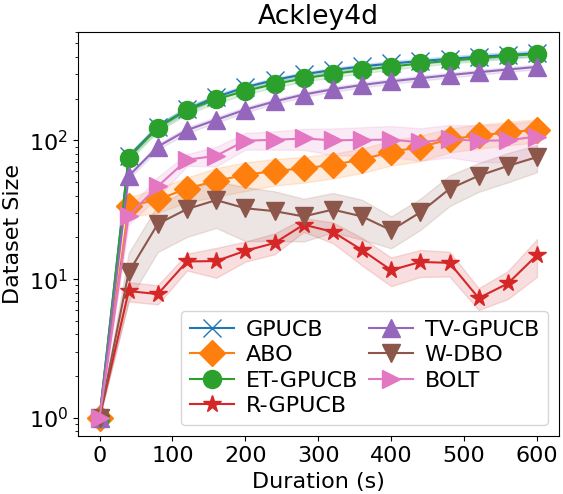}
    \caption{(Left) Average response time and average regrets of the TVBO solutions during the optimization of the Ackley synthetic function. (Right)~Dataset sizes of the TVBO solutions during the optimization of the Ackley synthetic function.}
    \label{fig:ackley}
\end{figure}

\paragraph{Ackley.} The Ackley function is $d'$-dimensional, and has the form
\begin{equation*}
    f(\bm z) = -a\exp\left(-b \sqrt{\frac{1}{d'} \sum_{i = 1}^{d'} z_i^2}\right) - \exp\left(\frac{1}{d'} \sum_{i = 1}^{d'} \cos(cz_i)\right) + a + \exp(1).
\end{equation*}

For the numerical evaluation, we set $a = 20$, $b = 0.2$, $c=2\pi$, $d' = 4$ and we optimized the function on the domain $[-32, 32]^{d'}$. The results are provided in Figure~\ref{fig:ackley}.

\begin{figure}[t]
    \centering
    \includegraphics[height=5cm]{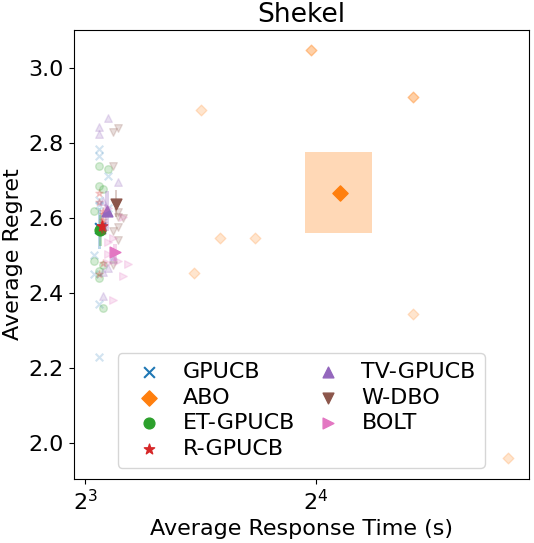} \quad
    \includegraphics[height=5cm]{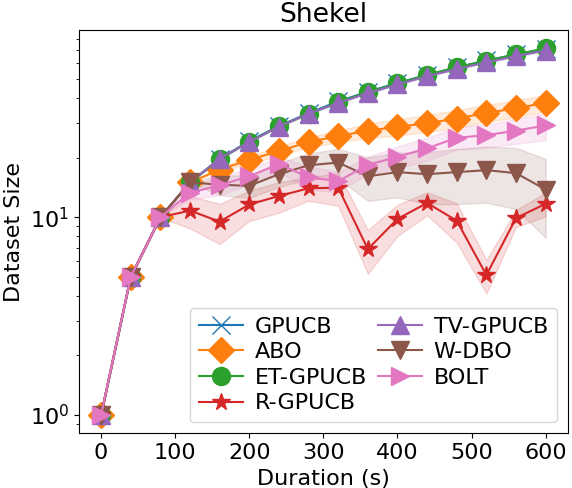}
    \caption{(Left) Average response time and average regrets of the TVBO solutions during the optimization of the Shekel synthetic function. (Right)~Dataset sizes of the TVBO solutions during the optimization of the Shekel synthetic function.}
    \label{fig:shekel}
\end{figure}

\paragraph{Shekel.} The Shekel function is $4$-dimensional, and has the form
\begin{equation*}
    f(\bm z) = - \sum_{i = 1}^m \left(\sum_{j = 1}^4 (z_j - C_{ji})^2 + \beta_i\right)^{-1}.
\end{equation*}

For the numerical evaluation, we set $m = 10$, $\bm \beta = \frac{1}{10}\left(1, 2, 2, 4, 4, 6, 3, 7, 5, 5\right)$,
\begin{equation*}
    \bm C = \begin{pmatrix}
        4 & 1 & 8 & 6 & 3 & 2 & 5 & 8 & 6 & 7\\
        4 & 1 & 8 & 6 & 7 & 9 & 3 & 1 & 2 & 3.6\\
        4 & 1 & 8 & 6 & 3 & 2 & 5 & 8 & 6 & 7\\
        4 & 1 & 8 & 6 & 7 & 9 & 3 & 1 & 2 & 3.6
    \end{pmatrix},
\end{equation*}
and we optimized the function on the domain $[0, 10]^4$. The results are provided in Figure~\ref{fig:shekel}.

\begin{figure}[t]
    \centering
    \includegraphics[height=5cm]{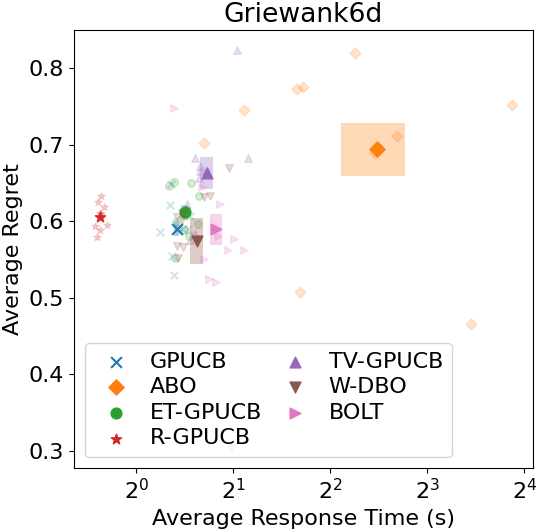} \quad
    \includegraphics[height=5cm]{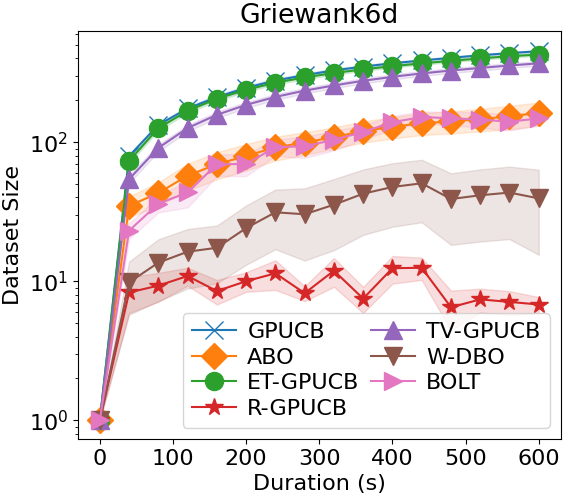}
    \caption{(Left) Average response time and average regrets of the TVBO solutions during the optimization of the Griewank synthetic function. (Right)~Dataset sizes of the TVBO solutions during the optimization of the Griewank synthetic function.}
    \label{fig:griewank}
\end{figure}

\paragraph{Griewank.} The Griewank function is $d'$-dimensional, and has the form
\begin{equation*}
    f(\bm z) = \sum_{i = 1}^{d'} \frac{x_i^2}{4000} - \prod_{i = 1}^{d'} \cos\left(\frac{x_i}{\sqrt{i}}\right) + 1
\end{equation*}

For the numerical evaluation, we set $d' = 6$ and we optimized the function on the domain $[-600, 600]^{d'}$. The results are provided in Figure~\ref{fig:griewank}.

\begin{figure}[t]
    \centering
    \includegraphics[height=5cm]{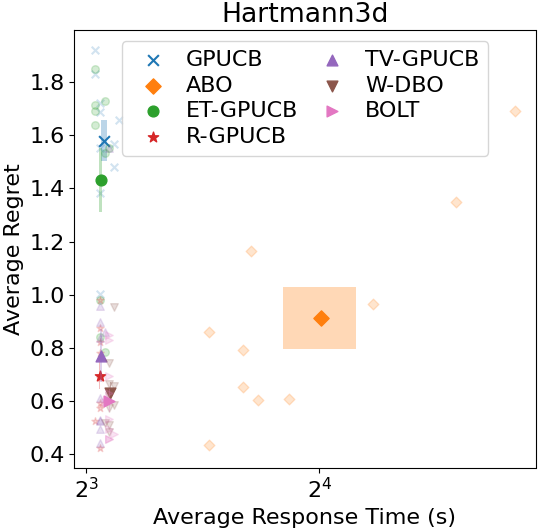} \quad
    \includegraphics[height=5cm]{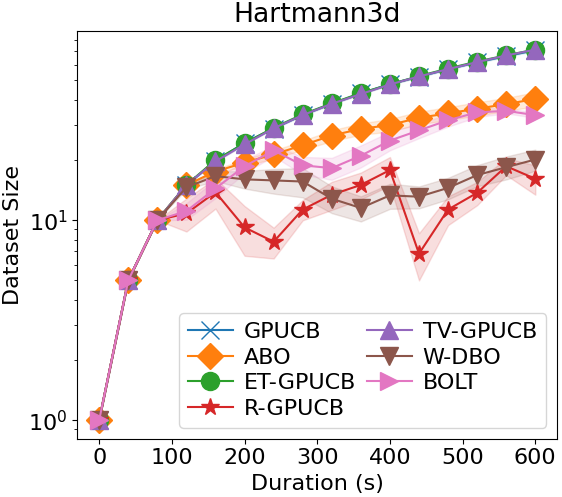}
    \caption{(Left) Average response time and average regrets of the TVBO solutions during the optimization of the Hartmann-3 synthetic function. (Right)~Dataset sizes of the TVBO solutions during the optimization of the Hartmann-3 synthetic function.}
    \label{fig:hartmann3}
\end{figure}

\paragraph{Hartmann-3.} The Hartmann-3 function is $3$-dimensional, and has the form
\begin{equation*}
    f(\bm z) = - \sum_{i = 1}^4 \alpha_i \exp\left(-\sum_{j = 1}^3 A_{ij} (z_j - P_{ij})^2\right).
\end{equation*}

For the numerical evaluation, we set $\bm \alpha = \left(1.0, 1.2, 3.0, 3.2\right)$,
\begin{equation*}
    \bm A = \begin{pmatrix}
        3 & 10 & 30\\
        0.1 & 10 & 35\\
        3 & 10 & 30\\
        0.1 & 10 & 35\\
    \end{pmatrix}, \bm P = 10^{-4}\begin{pmatrix}
        3689 & 1170 & 2673\\
        4699 & 4387 & 7470\\
        1091 & 8732 & 5547\\
        381 & 5743 & 8828
    \end{pmatrix},
\end{equation*}
and we optimized the function on the domain $[0, 1]^3$. The results are provided in Figure~\ref{fig:hartmann3}.

\begin{figure}[t]
    \centering
    \includegraphics[height=5cm]{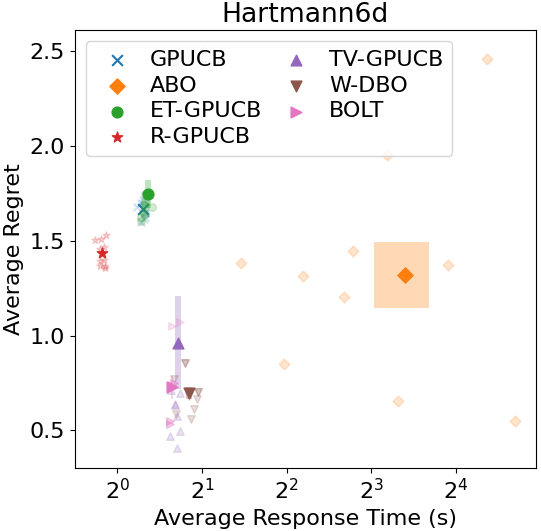} \quad
    \includegraphics[height=5cm]{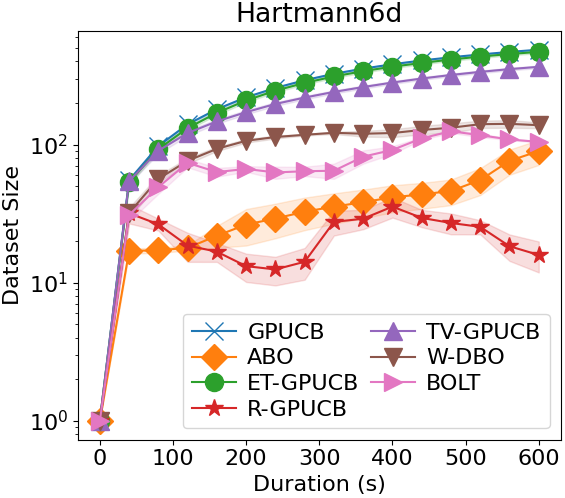}
    \caption{(Left) Average response time and average regrets of the TVBO solutions during the optimization of the Hartmann-6 synthetic function. (Right)~Dataset sizes of the TVBO solutions during the optimization of the Hartmann-6 synthetic function.}
    \label{fig:hartmann6}
\end{figure}

\paragraph{Hartmann-6.} The Hartmann-6 function is $6$-dimensional, and has the form
\begin{equation*}
    f(\bm z) = - \sum_{i = 1}^4 \alpha_i \exp\left(-\sum_{j = 1}^6 A_{ij} (z_j - P_{ij})^2\right).
\end{equation*}

For the numerical evaluation, we set $\bm \alpha = \left(1.0, 1.2, 3.0, 3.2\right)$,
\begin{equation*}
    \bm A = \begin{pmatrix}
        10 & 3 & 17 & 3.50 & 1.7 & 8\\
        0.05 & 10 & 17 & 0.1 & 8 & 14\\
        3 & 3.5 & 1.7 & 10 & 17 & 8\\
        17 & 8 & 0.05 & 10 & 0.1 & 14\\
    \end{pmatrix}, \bm P = 10^{-4}\begin{pmatrix}
        1312 & 1696 & 5569 & 124 & 8283 & 5886\\
        2329 & 4135 & 8307 & 3736 & 1004 & 9991\\
        2348 & 1451 & 3522 & 2883 & 3047 & 6650\\
        4047 & 8828 & 8732 & 5743 & 1091 & 381
    \end{pmatrix},
\end{equation*}
and we optimized the function on the domain $[0, 1]^6$. The results are provided in Figure~\ref{fig:hartmann6}.

\begin{figure}[t]
    \centering
    \includegraphics[height=4.5cm]{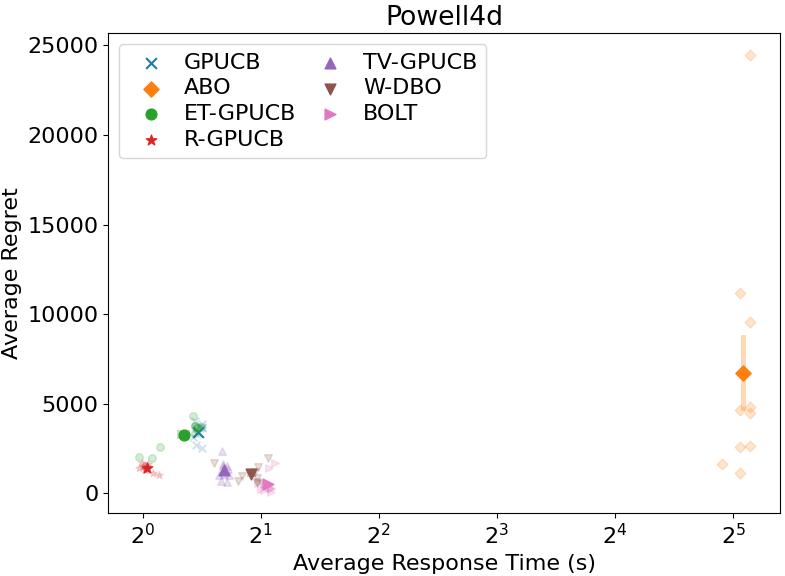} \quad
    \includegraphics[height=4.5cm]{rsc/Powell_Dataset.png}
    \caption{(Left) Average response time and average regrets of the TVBO solutions during the optimization of the Powell synthetic function. (Right)~Dataset sizes of the TVBO solutions during the optimization of the Powell synthetic function.}
    \label{fig:powell}
\end{figure}

\paragraph{Powell.} The Powell function is $d'$-dimensional, and has the form
\begin{equation*}
    f(\bm z) = \sum_{i = 1}^{d' / 4} (z_{4i-3} + 10z_{4i-2})^2 + 5(z_{4i - 1} - z_{4i})^2 + (z_{4i-2} - 2z_{4i-1})^4 + 10(z_{4i-3} - z_{4i})^4.
\end{equation*}

For the numerical evaluation, we set $d' = 4$ and we optimized the function on the domain $[-4, 5]^{d'}$. The results are provided in Figure~\ref{fig:powell}.

\begin{figure}[t]
    \centering
    \includegraphics[height=5cm]{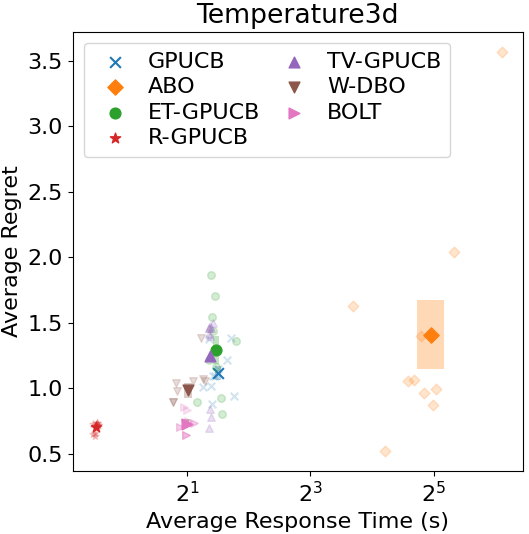} \quad
    \includegraphics[height=5cm]{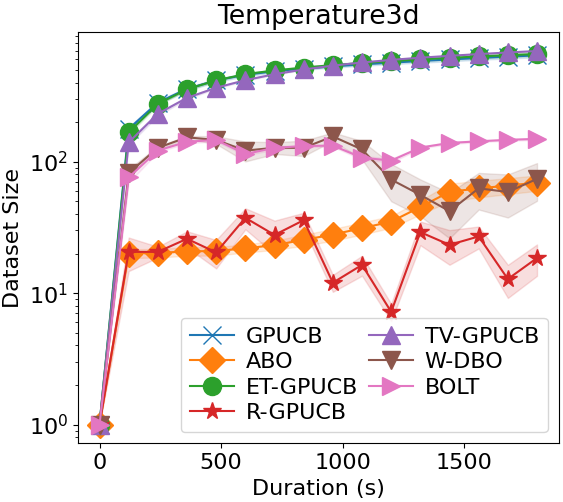}
    \caption{(Left) Average response time and average regrets of the TVBO solutions during the Temperature real-world experiment. (Right)~Dataset sizes of the TVBO solutions during the Temperature real-world experiment.}
    \label{fig:temperature}
\end{figure}

\paragraph{Temperature.} This benchmark comes from the temperature dataset collected from 46 sensors deployed at Intel Research Berkeley. It is a famous benchmark, used in other works such as~\citet{bogunovic2016time, brunzemaevent}. The goal of the TVBO task is to activate the sensor with the highest temperature, which will vary with time. To make the benchmark more interesting, we interpolate the data in space-time. With this interpolation, the algorithms can activate any point in space-time, making it a 3-dimensional benchmark (2 spatial dimensions for a location in Intel Research Berkeley, 1 temporal dimension).

For the numerical evaluation, we used the first day of data. The results are provided in Figure~\ref{fig:temperature}.

\begin{figure}[t]
    \centering
    \includegraphics[height=5cm]{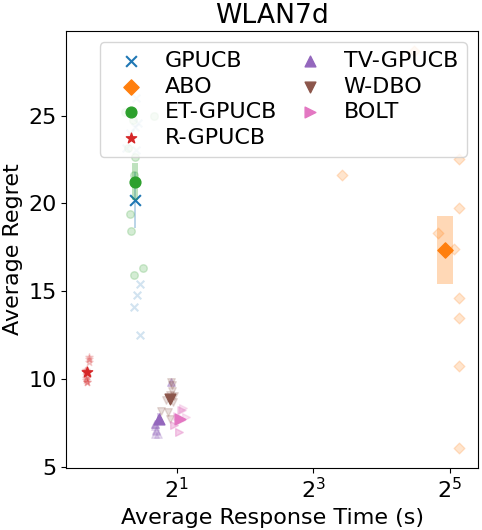} \quad
    \includegraphics[height=5cm]{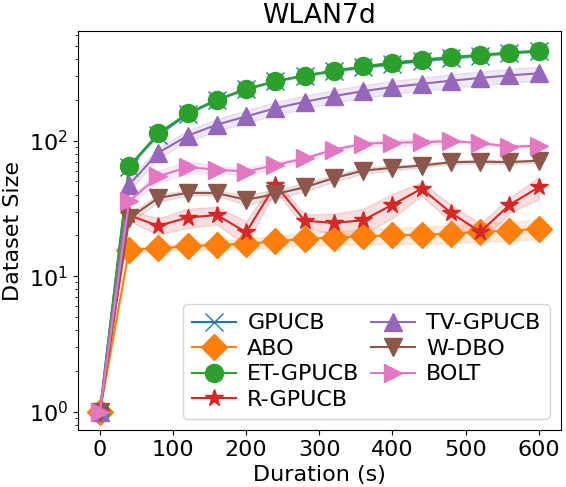}
    \caption{(Left) Average response time and average regrets of the TVBO solutions during the WLAN real-world experiment. (Right)~Dataset sizes of the TVBO solutions during the WLAN real-world experiment.}
    \label{fig:wlan}
\end{figure}

\paragraph{WLAN.} This benchmark aims at maximizing the throughput of a Wireless Local Area Network (WLAN). 27 moving end-users are associated with one of 6 fixed nodes and continuously stream a large amount of data. As they move in space, they change the radio environment of the network, which should adapt accordingly to improve its performance. To do so, each node has a power level that can be tuned for the purpose of reaching the best trade-off between serving all its users and not causing interference for the neighboring nodes.

The performance of the network is computed as the sum of the Shannon capacities for each pair of node and associated end-users. The Shannon capacity~\citep{kemperman-shannon:1974} sets a theoretical upper bound on the throughput of a wireless communication. We denote it $C(i, j)$, we express it in bits per second (bps). It depends on $S_{ij}$ the Signal-to-Interference plus Noise Ratio (SINR) of the communication between node $i$ and end-user $j$, as well as on $W$, the bandwidth of the radio channel (in Hz):
\begin{equation*}
    C_{ij}(\bm x, t) = W \log_2(1 + S_{ij}(\bm x, t)).
\end{equation*}

Then, the objective function is
\begin{equation*}
    f(\bm x, t) = \sum_{i = 1}^{6} \sum_{j \in \mathcal{N}_i} C_{ij}(\bm x, t),
\end{equation*}
with $\mathcal{N}_i$ the end-users associated with node $i$.

For the numerical evaluation, we optimized the power levels $\bm x$ in the domain $[10^{0.1}, 10^{2.5}]^{6}$. For this experiment, the TVBO solutions were evaluated with a Matérn-5/2 for the spatial covariance function and a Matérn-1/2 for the temporal covariance function. The results are provided in Figure~\ref{fig:wlan}.

\end{document}